\newtheoremstyle{break}
  {\topsep}{\topsep}%
  {\itshape}{}%
  {\bfseries}{}%
  {\newline}{}%
\newtheorem{theorem}{Theorem}
\newtheorem{proposition}{Proposition}
\newtheorem{remark}{Remark}
\newtheorem{definition}{Definition}
\theoremstyle{break}
\definecolor{note}{rgb}{0.3,0.7,0.25}
\definecolor{rephase}{rgb}{0.15,0.7,0.15}
\definecolor{bag}{rgb}{0.5,0.5,0.0}
\newcommand{\velodyneN}{\textit{32-Beam Velodyne ULTRA Puck LiDAR}}
\newcommand{\xaxisN}{$x$-axis}
\newcommand{\xaxis}{$x$-axis~}
\newcommand{\yaxis}{$y$-axis~}
\newcommand{\comment}[1]{}
\def\real{\mathbb{R}}
\DeclareDocumentCommand{\RI}{ O{} O{} }{\mathcal{RI}_{#1}^{#2}}
\DeclareDocumentCommand{\td}{ O{} }{\tilde{#1}}
\newcommand{\transpose}{\mathsf{T}}
\DeclareDocumentCommand{\asin}{ O{} }{\sin^{-1}(#1)}
\DeclareDocumentCommand{\acos}{ O{} }{\cos^{-1}(#1)}
\DeclareDocumentCommand{\atan}{ O{} }{\tan^{-1}(#1)}
\DeclareDocumentCommand{\vector}{ O{} }{\mathrm{vec}(#1)}
\DeclareDocumentCommand{\zeros}{ O{} }{\textbf{0}_{#1}}
\DeclareDocumentCommand{\pre}{ O{} O{} }{{}_{#1}^{#2}}
\DeclareMathOperator*{\argmin}{arg\,min}
\newcommand{\Ecal}{\mathcal{E}}
\newcommand{\Gcal}{\mathcal{G}}
\newcommand{\Ocal}{\mathcal{O}}
\newcommand{\Pcal}{\mathcal{P}}
\DeclareDocumentCommand{\A}{ O{} O{} }{\textbf{A}_{#1}^{#2}}
\DeclareDocumentCommand{\H}{ O{} O{} }{\textbf{H}_{#1}^{#2}}
\DeclareDocumentCommand{\I}{ O{} O{} }{\textbf{I}_{#1}^{#2}}
\DeclareDocumentCommand{\L}{ O{} O{} }{\textbf{L}_{#1}^{#2}}
\DeclareDocumentCommand{\M}{ O{} O{} }{\textbf{M}_{#1}^{#2}}
\DeclareDocumentCommand{\N}{ O{} O{} }{\textbf{N}_{#1}^{#2}}
\DeclareDocumentCommand{\O}{ O{} O{} }{\textbf{O}_{#1}^{#2}}
\DeclareDocumentCommand{\P}{ O{} O{} }{\textbf{P}_{#1}^{#2}}
\DeclareDocumentCommand{\Q}{ O{} O{} }{\textbf{Q}_{#1}^{#2}}
\DeclareDocumentCommand{\R}{ O{} O{} }{\textbf{R}_{#1}^{#2}}
\DeclareDocumentCommand{\T}{ O{} O{} }{\textbf{T}_{#1}^{#2}}
\DeclareDocumentCommand{\U}{ O{} O{} }{\textbf{U}_{#1}^{#2}}
\DeclareDocumentCommand{\V}{ O{} O{} }{\textbf{V}_{#1}^{#2}}
\DeclareDocumentCommand{\X}{ O{} O{} }{\textbf{X}_{#1}^{#2}}
\DeclareDocumentCommand{\Y}{ O{} O{} }{\textbf{Y}_{#1}^{#2}}
\DeclareDocumentCommand{\Z}{ O{} O{} }{\textbf{Z}_{#1}^{#2}}
\DeclareDocumentCommand{\e}{ O{} O{} }{\textbf{e}_{#1}^{#2}}
\DeclareDocumentCommand{\n}{ O{} O{} }{\textbf{n}_{#1}^{#2}}
\DeclareDocumentCommand{\o}{ O{} O{} }{\textbf{o}_{#1}^{#2}}
\DeclareDocumentCommand{\t}{ O{} O{} }{\textbf{t}_{#1}^{#2}}
\DeclareDocumentCommand{\p}{ O{} O{} }{\textbf{p}_{#1}^{#2}}
\DeclareDocumentCommand{\q}{ O{} O{} }{\textbf{q}_{#1}^{#2}}
\DeclareDocumentCommand{\r}{ O{} O{} }{\textbf{r}_{#1}^{#2}}
\DeclareDocumentCommand{\u}{ O{} O{} }{\textbf{u}_{#1}^{#2}}
\DeclareDocumentCommand{\v}{ O{} O{} }{\textbf{v}_{#1}^{#2}}
\DeclareDocumentCommand{\x}{ O{} O{} }{\textbf{x}_{#1}^{#2}}
\begin{document}

\begin{frontmatter}

\title{Realtime Safety Control for Bipedal Robots to \\Avoid Multiple Obstacles via CLF-CBF Constraints}

\author{
Jinze Liu$^*$,
Minzhe Li$^*$, Jessy W. Grizzle, and Jiunn-Kai Huang\fnref{fn1}}

\cortext[cor1]{equal contribution.}
\fntext[fn1]{Jinze Liu, Minzhe Li, Jessy W. Grizzle, and Jiunn-Kai Huang are with the Robotics
Department, University of Michigan, Ann Arbor, MI 48109, USA. \texttt{\{
jzliu, minzlee, grizzle, bjhuang\}@umich.edu}.
}

\begin{abstract} 
This paper presents a reactive planning system that allows a Cassie-series bipedal robot to avoid multiple non-overlapping obstacles via a single, continuously differentiable control barrier function (CBF). The overall system detects an individual obstacle via a height map derived from a LiDAR point cloud and computes an elliptical outer approximation, which is then turned into a CBF. The QP-CLF-CBF formalism developed by Ames \textit{et al.} is applied to ensure that safe trajectories are generated. Liveness is ensured by an analysis of induced equilibrium points that are distinct from the goal state. Safe planning in environments with multiple obstacles is demonstrated both in simulation and experimentally on the Cassie biped.  
\end{abstract}

\begin{keyword}
Motion Planning, Autonomous Navigation, Obstacle Avoidance, Control Barrier Function, Control Lyapunov Function
\end{keyword}

\end{frontmatter}


%




\section{Introduction and Contributions}
Bipedal robots are typically conceived to achieve agile-legged locomotion over irregular terrains, and maneuver in cluttered environments\cite{huang2021efficient,gong2020angular, GrantMPC}. To explore safely in such environments, it is critical for robots to generate quick, yet smooth responses to any changes in the obstacles, map, and environment. In this paper, we propose a means to design and compose control barrier functions (CBFs) for multiple non-overlapping obstacles and evaluate the system on a 20-degree-of-freedom (DoF) bipedal robot. 


\begin{figure}[h!]%
\centering
\subfloat{%
\includegraphics[width=0.67\columnwidth]{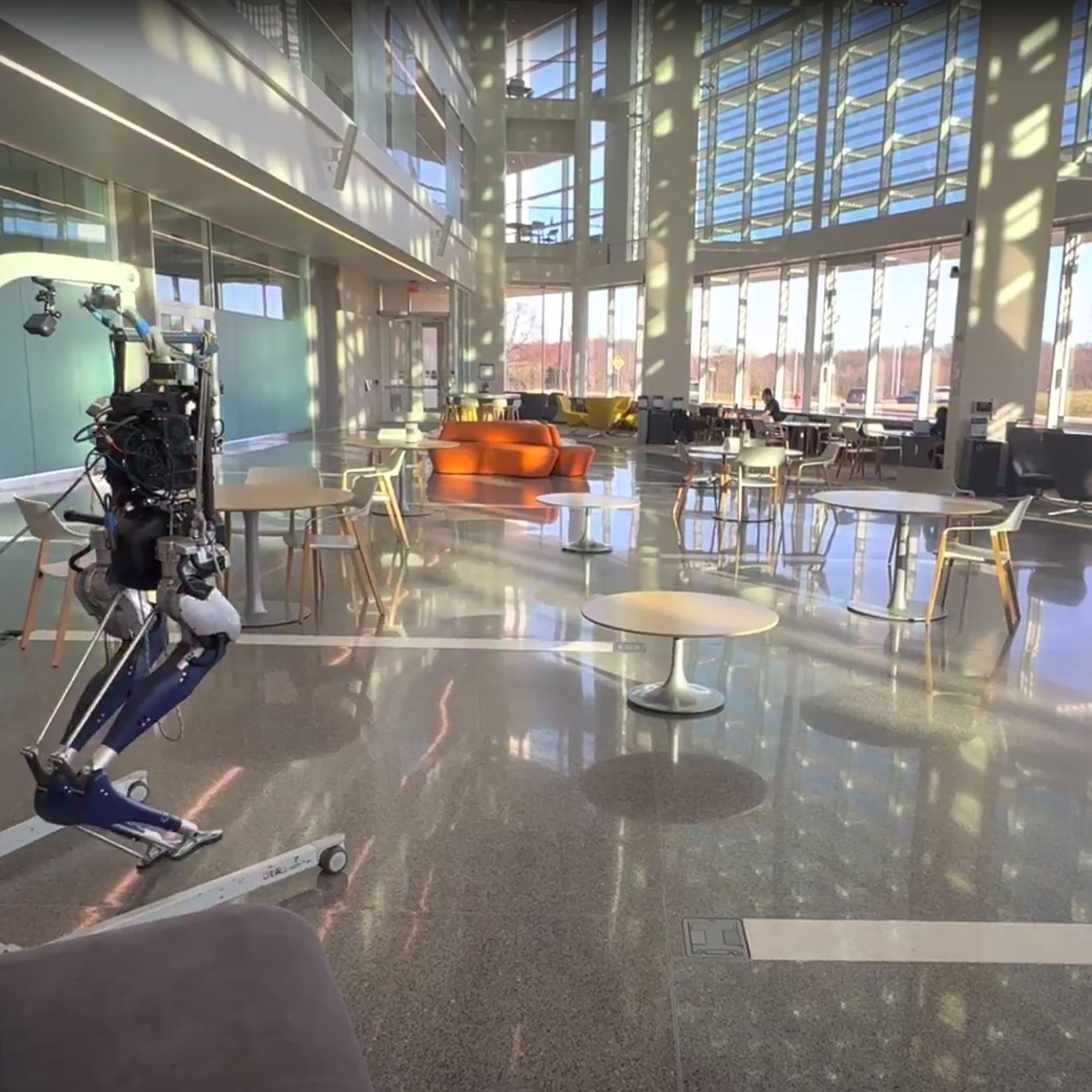}}\vspace{2pt}
\subfloat{%
\includegraphics[width=0.67\columnwidth]{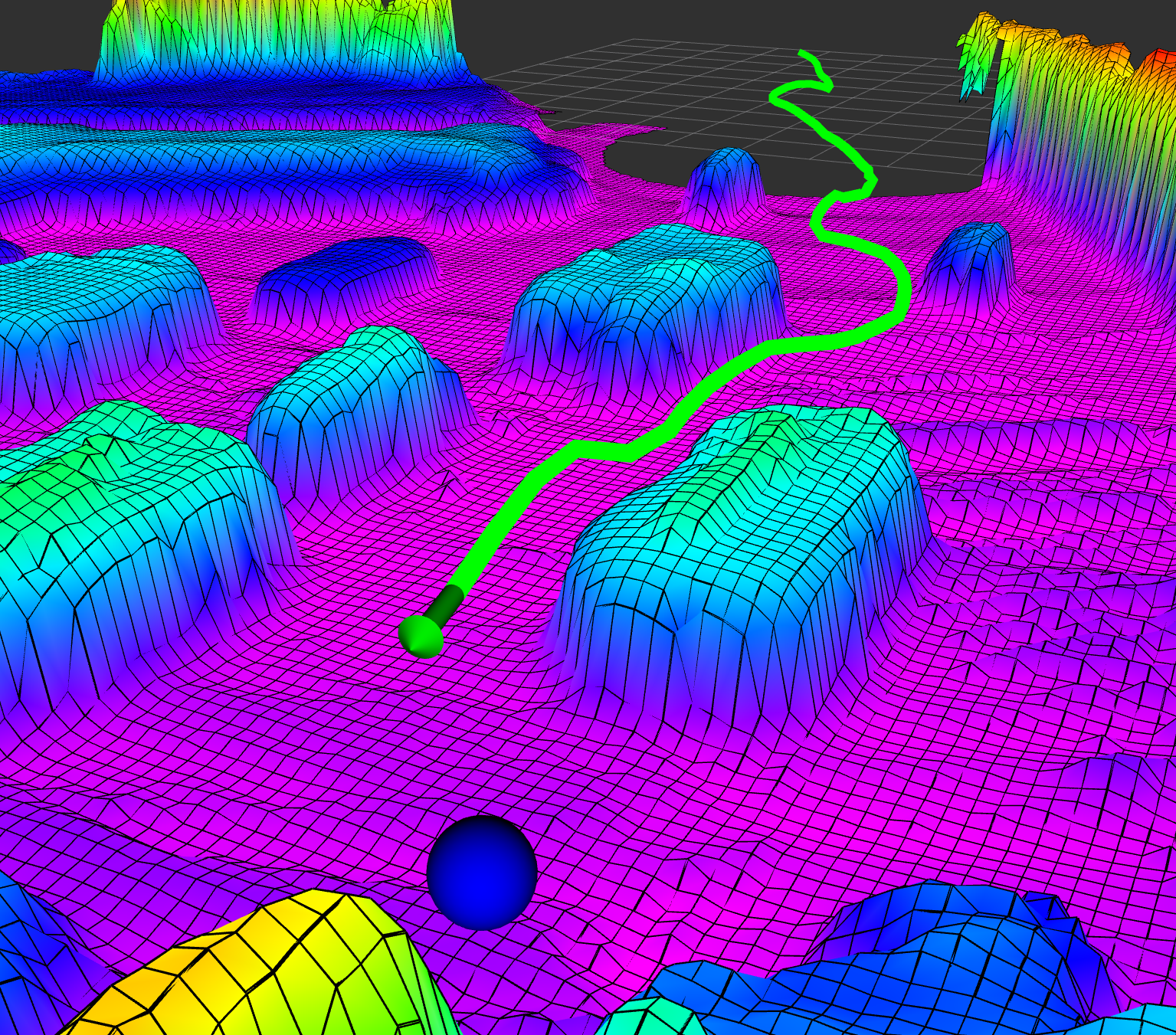}}
\caption[]{In the top figure, Cassie Blue autonomously avoids multiple obstacles via the developed CLF-CBF-QP obstacle avoidance system, comprised of an intermediate goal selector, obstacle detection, and a CLF-CBF quadratic programming solver. The bottom figure is the elevation map built in real time. The blue and cyan blobs are obstacles that Cassie detects and avoids in real time. A gantry is used in the experiments because the lab-built perception system that has been added to the robot is unprotected in case of falls.}
\label{fig:FirstImage}%
\end{figure}

In an autonomous system, the task of avoiding obstacles is usually handled by a planning algorithm because it has access to the map of an entire environment. Given the map, the planning algorithm is then able to design a collision-free path from the robot's current position to a goal. If the map is updated due to a change in the environment, the planner then needs to update the planned path, so-called replanning, to accommodate the new environment. Such maps are typically large and contain rich information such as semantics, terrain characteristics, and uncertainty, and thus are slow to update. This raises a concern when obstacles either move into the planned path but the map has not been updated or a robot's new pose allows the detection of previously unseen obstacles. The slow update rate of the map leads to either collision or abrupt maneuvers to avoid collisions. The non-smooth aspects arising from the map updates or changes in the perceived environment can be detrimental to the stability of the overall system. 

Research on obstacle avoidance has been studied for several decades as pioneered in classic probabilistic roadmap approaches (PRM) \cite{kavraki1996probabilistic} and cell decomposition \cite[Chapter 6]{lavalle2006planning}. However, the omission of robot dynamics and the extra computation for map discretization make these methods hard to use in real-time applications. Artificial potential fields \cite{borenstein1989real,hwang1992potential,valavanis2000mobile, montiel2015path, IntroLR:PotentialField2, IntroLR:PotentialField3, IntroLR:PotentialField4, IntroLR:PF5localminima, IntroLR:PF6oscillation, IntroLR:PF7oscillation2} stand out for their simplicity, extendability, and efficiency, leading to their wide adoption for real-time obstacle avoidance planning problems. A drawback of potential field methods is that they require the entire map of an environment to be available when designing a potential function that will render attractive one or more goal points in the map. Moreover, unwanted local minima and oscillations in the potential field have limited their deployment in the field. A control barrier function (CBF) \cite{LR:CBFCLFInsp}, on the other hand, enables real-time controller synthesis with provable safety for mobile robots operating in a continuous (non-discretized) space and can work with a partial (or incomplete) map. 

Control Lyapunov functions (CLFs) are positive definite functions such that at any given time instance, there exists a control input that renders the derivative of the function along the system dynamics negative definite. A CLF is typically designed to vanish at a desired goal state or pose.

The main theme of \cite{LR:CBFCLFInsp} is that a real-time quadratic program (QP) can be used to combine a CLF and a CBF in such a way that closed-loop trajectories induced by the CLF are minimally modified to provide provable safety, that is, non-collision with obstacles. This design philosophy has been explored in \cite{LR:CBFCLFInsp,LR:CLFCBF1, LR:CLFCBF2}. 

One means of avoiding obstacles is to come to a complete stop, though it is at the cost of not reaching the goal state. The papers \cite{reis2020control, LR:EquilibriaMultiAgent, LR:EquilibriaTan, LR:EquilibriaBall} showed that such behavior can be an unintended outcome of the CLF-CBF-QP design approach of \cite{LR:CBFCLFInsp}. Specifically, the inequality constraints (of the QP) associated with the derivatives of the control Lyapunov and control barrier functions can induce equilibria in the closed-loop system that are distinct from the equilibrium of the CLF. Reference \cite{reis2020control} characterizes these equilibria via the KKT conditions associated with the QP, while reference \cite{LR:EquilibriaMultiAgent} emphasizes that if an induced equilibrium is unstable, then ``natural noise'' in the environment will avoid the robot being deadlocked at an unstable equilibrium.


Inspired by the above-cited works on CLF-CBF-QPs for planning and control, we  incorporate high-bandwidth obstacle avoidance into the CLF-RRT* reactive planner of \cite{huang2021efficient}. The CLF in \cite{huang2021efficient} takes into account features specific to bipeds, such as the limited lateral leg motion that renders lateral walking more laborious than sagittal plane walking. This paper seeks to utilize the CLF designed specifically for bipedal robots in tandem with a CBF to avoid multiple, non-overlapping obstacles in a smooth fashion, while ensuring progress to a goal state.

The main contributions of the new proposed CLF-CBF system are the following:
\begin{enumerate}
  \item We propose a novel CLF-CBF-QP obstacle avoidance system specifically adapted for bipedal robots locomoting in the presence of multiple non-overlapping obstacles. The full system provides for real-time obstacle detection, CBF design, and safe control input generation through a QP. 
  \item We mathematically prove the validity of the proposed CBF for both single and multiple obstacles. We also analytically analyze the existence of spurious equilibrium points induced by the CLF-CBF constraints on the QP. 
  \item We propose a simple means to smooth and interpolate the discontinuous reference control variables, which are introduced when switching from one target to another. 
  \item We provide simulations that support the mathematical analysis for obstacle avoidance while reaching a goal.
  \item The overall reactive planning system is demonstrated experimentally on a  20-degree-of-freedom Cassie-series bipedal robot. 
  \item We open-source the implementations of the entire CLF-CBF system in C++ with Robot Operating System (ROS) \cite{ros} and associated videos of the experiments; see \href{https://github.com/UMich-BipedLab/multi_object_avoidance_via_clf_cbf}{https://github.com/UMich-BipedLab}\href{https://github.com/UMich-BipedLab/multi_object_avoidance_via_clf_cbf} {/multi\_object\_avoidance\_via\_clf\_cbf}.
\end{enumerate}

The rest of the paper is organized as follows. Section \ref{sec:RelatedWork} overviews related work. The design and validation of the proposed CBF is presented in Sec.~\ref{sec:Methods}. We analyze equilibrium points of the proposed CBF in \ref{sec:EquilibriumAnalysis}. Section \ref{sec:Multi-CBF} proposes a novel and simple method to combine CBFs for non-overlapping obstacles. Section \ref{sec:SmoothRefControl} introduces a method to smooth the reference control variables while switching target positions. Simulation and experimental results are given in Sec.~\ref{sec:Simulation} and Sec.~\ref{sec:Experiments}, respectively. Section \ref{sec:Conclusion} concludes the paper and provides potential future work.

\section{Related Work on Control with Safety}
\label{sec:RelatedWork}


A continuously differentiable, proper, positive definite function $V(x)$ that vanishes at a single point is called a candidate Lyapunov function \cite{khalil2002}. If the derivative of $V(x)$ along the trajectories of a control system can be rendered negative definite by proper choice of the control input, it is called a \textit{control Lyapunov Function}, or CLF for short  \cite{Sontag:firstCLF,Sontag:universal,FK:Book}. CLFs are widely used in the design of controllers to asymptotically drive a system to a goal state. Safety involves steering a control system to a goal state while avoiding self-collisions, obstacles, or other undesirable states, collectively referred to as \textit{unsafe states}. The set complement of the unsafe states is the set of \textit{safe states}.

\subsection{Artificial Potential Fields and Navigation Functions}

The first systematic method for real-time control and obstacle avoidance was introduced by Khatib in \cite{khatib1985real}. Called the method of \textit{Artificial Potential Functions}, it revolutionized feedback control for manipulators in that hard constraints could be enforced in both the robot's task space and joint space in real time. Prior to this seminal work, obstacle avoidance, or more generally the generation of safe paths, was relegated to a path planner operating at a much slower time scale. A survey of the method of potential functions can be found in \cite{koditschek1989robot}. 

Potential functions seek to construct ``repulsive fields'' around obstacles that are active throughout the entire state space of the robot's dynamical system, without destroying the presence of an attractive field steering the system to a goal state. It has been recognized that superimposed attracting and repelling fields can create undesired spurious equilibria, which prevent a robot from reaching its goal state \cite{koren1991potential}. In addition, potential fields have been observed to introduce trajectory oscillations as a robot passes near obstacles. Heuristic modifications have been proposed to avoid local minima \cite{IntroLR:PotentialField3,IntroLR:PotentialField4,IntroLR:PF5localminima}, while potential fields have been combined with other gradient-based functions to reduce oscillations \cite{IntroLR:PF6oscillation,IntroLR:PF7oscillation2}. 



The method of \textit{Navigation Functions} by Koditschek and Rimon \cite{rimon1990exact} sought to design a single function whose gradient produces trajectories avoiding multiple obstacles while asymptotically converging to a single goal state from \textit{almost all initial conditions} \cite{koditschek1990robot, ExactRobot2016, arslan2019sensor, paternain2017navigation}; specifically, all equilibria except the goal state should be unstable. Because the design of a navigation function takes into account the global topology of the method of navigation functions is not appropriate for problems requiring the online identification and avoidance of obstacles; in addition, there are topological restrictions to the existence of navigation functions.

\subsection{Control Barrier Functions and Control Lyapunov Functions}

\textit{Barrier Functions} provide Lyapunov-like conditions for proving a given set of safe states is forward invariant, meaning that trajectories starting in the safe set remain in the safe set. The natural extension of a barrier function to a system with control inputs is a \textit{Control Barrier Function} or CBF for short, first proposed by \cite{wieland2007constructive}. CBFs parallel the extension of Lyapunov functions to CLFs, in that the key point is to impose inequality constraints on the derivative of a candidate CBF (resp., CLF) to establish entire classes of controllers that render a given set forward invariant (resp., asymptotically stable).

Importantly, barrier functions and CBFs focus solely on safety and do not seek to simultaneously steer a system to any particular point in the safe set. This allows CBFs to be combined with other ``goal-oriented'' control methods as a (maximally permissive) supervisor that only modifies a trajectory when it is in conflict with the safety criteria established by the CBF. The papers \cite{LR:CBFvsRA, LR:CBFSafety-Critical} introduced the notion of using a real-time quadratic program (QP) to combine a CBF with a CLF to achieve convergence to a goal state while avoiding unsafe states. The overall method goes by the acronym CLF-CBF-QP. 

For control systems that are affine in the control variable, CLF-CBF-QPs have proven to be enormously popular in and out of robotics applications \cite{LR:CBFCLFInsp, LR:CLFCBF1, LR:CLFCBF2, LR:CLFCBF3, LR:CLFCBF4, desai2022clf,basso2020safety, agrawal2017discrete}.
To highlight just a few example, reference \cite{LR:CLFCBF1} uses a CLF-CBF-QP to achieve stable walking for bipedal robots, while trajectory planning under spatiotemporal and control input constraints is presented in \cite{LR:CLFCBF4,LR:CLFCBF2, LR:CLFCBF3}. Applications to obstacle avoidance are addressed in \cite{agrawal2017discrete, desai2022clf, basso2020safety}. 

The recent paper \cite{IntroLR:PFvsCBF} shows that CBFs are a strict generalization of artificial potential functions and that in a practical example, a CLF-CBF-QP has reduced issues with oscillations as a robot passes near obstacles and improved \textit{liveness}, meaning the ability to reach the goal state. Hence, we use the method of CLF-CBF-QPs in this paper.

\subsection{Combining Multiple CBFs}

Usually, a control barrier function is designed for a single obstacle. When there are multiple obstacles in the control system, the barrier functions for each obstacle must be combined in some manner to provide safety guarantees. Reference \cite{rauscher2016constrained} 
shows that if the intersection of the set of ``allowable controls'' of individual CBFs is non-empty, then the CLF-CBF-QP method can be extended to several obstacles; the reference does not show how to check this condition online (in real time). Multiple CBF functions have also been combined to obtain a single CBF so that existing methods can be applied. Reference \cite{glotfelter2017nonsmooth} combines several CBFs into an overall CBF using max-min operations. The resulting CBF is non-differentiable and hence this technique is not used here. 
Reference \cite{romdlony2016stabilization} combines multiple CBFs for disjoint unsafe sets with a single CLF to produce a new CLF that simultaneously provides asymptotic stability and obstacle avoidance. This work is therefore related to the method navigation functions reviewed above and suffers from the same drawbacks; however, a key technique used in this reference to combine the CBFs before merging them with a CLF will be exploited in the current paper, namely a continuously differentiable saturation function.

\subsection{CLF-CBF-QPs and Unwanted Equilibrium Points}
The presence of multiple stable equilibrium points introduces ``deadlock'' in a control system.
Reference \cite{reis2020control} shows that the use of real-time QPs to combine safety and goal-reaching in navigation problems can lead to unwanted equilibrium points. With this awareness, the authors of \cite{LR:EquilibriaTan} modify the cost function in the quadratic program to remove the unwanted equilibria. The modification induces a rotational motion in the closed-loop system that steers it around the obstacle, something a bipedal robot can do naturally. Hence, here we only exploit their analysis method for finding the unwanted equilibria and show that our method introduces at most one undesired equilibrium point when obstacles are disjoint. Moreover, we do not need to remove the unwanted equilibrium using the methods in \cite{LR:EquilibriaCollisionCone, LR:EquilibriaBall} by transforming the system's state space into a convex manifold, or by increasing the complexity of the system's state  space.

 \subsection{Summary}
The presence of multiple obstacles is common in practice. While existing works can treat disjoint obstacles, they are not appropriate for use where obstacles are identified in real-time via an onboard perception system. In this work, for a biped-appropriate planning model, we propose a simple means to combine CBFs for disjoint obstacles so that the complexity of the real-time CLF-CBF-QP remains constant and induced equilibrium points are easy to characterize and avoid.

\section{Construction of Control Lyapunov Function and Control Barrier Function}
\label{sec:Methods}


This section introduces the CLF proposed in \cite{huang2021efficient} and analyzes its trajectories when combined with a quadratic CBF through a real-time QP. The goal is to ensure the closed-loop system is able to reach a goal state while smoothly avoiding a single obstacle. This section lays the foundation for considering multiple obstacles in the next section.

\subsection{State Representation}
\label{sec:StateRepresentation}
Denote $\Pcal=(x_r,y_r,\theta)$ the robot pose, $\Gcal=(x_t,y_t)$ the goal position in the world frame. We simplify an obstacle $\Ocal$ as a circle (and hence convex) described as its center $(x_o, y_o)$ and its radius $r_o$. We define the robot state as 
\begin{equation}
\label{eq:RobotState}
    x = \begin{bmatrix}
            r \\ 
            \delta\\
            \theta
        \end{bmatrix}, 
\end{equation}
where $r = \sqrt{(x_t-x_r)^2+(y_t-y_r)^2}$, $\theta$ is the heading angle of the robot, and $\delta$ is the angle between $\theta$ and the line of sight from the robot to the goal, as shown in Fig.~\ref{fig:PhysicalMeaning}. 

The dynamics of the control system is defined as
\begin{equation}
\label{eq:ControlSystem}
\begin{split}
    \dot{x} &= f(x) + g(x)u \\
    &= 
    \begin{bmatrix}
        0 \\ 
        0 \\
        0
    \end{bmatrix}
    +
    \begin{bmatrix}
        -\cos(\delta) & -\sin(\delta) & 0 \\ 
        \frac{\sin(\delta)}{r} & -\frac{\cos(\delta)}{r} & 1 \\
        0 & 0 & -1
    \end{bmatrix}
    \begin{bmatrix}
        v_x \\ 
        v_y \\
        \omega
    \end{bmatrix},
\end{split}
\end{equation}
where we view $u=\begin{bmatrix} v_x, &v_y, &\omega\end{bmatrix}^\transpose$ as the control variables in the robot frame, as shown in Fig.~\ref{fig:PhysicalMeaning}.

\begin{figure}[t]
    \centering
    \includegraphics[width = 0.5\textwidth]{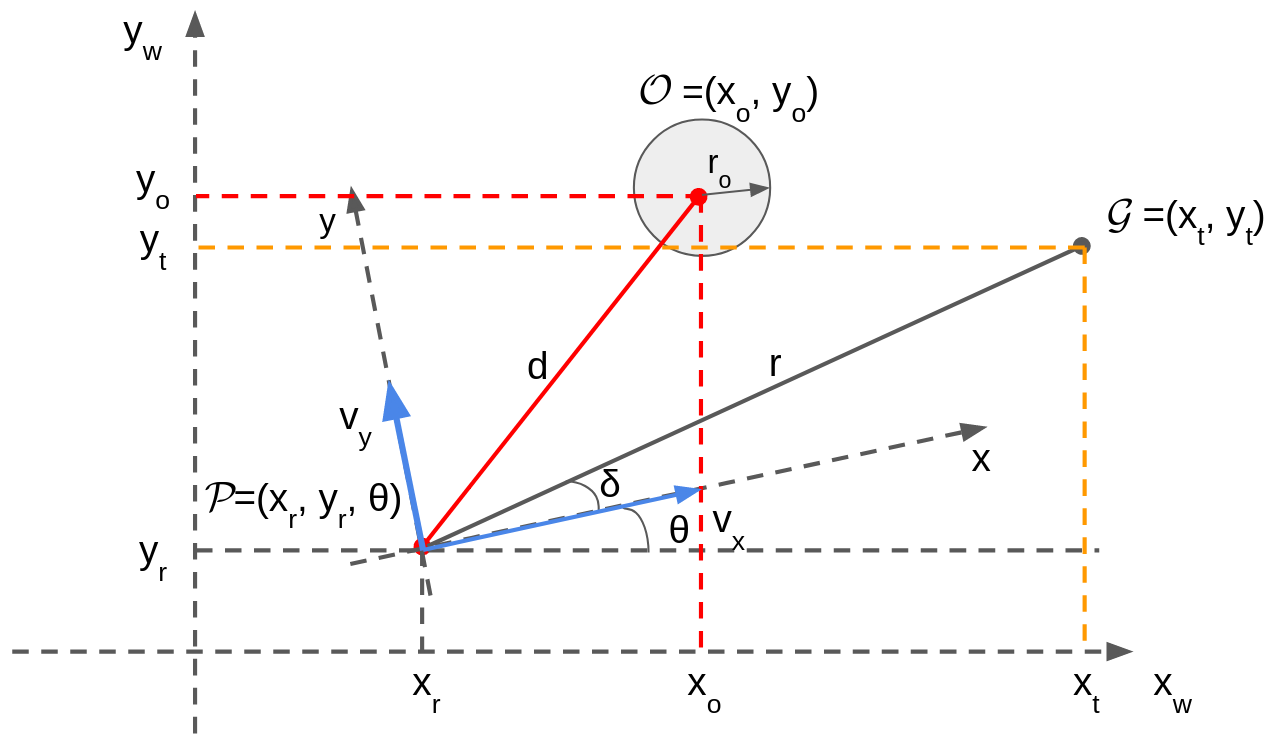}
    \caption{Illustration of the robot state representation. $x_w$ and $y_w$ are the axes of the world frame. The robot's pose and the target position are $\Pcal = (x_r, y_r, \theta)$ and $\Gcal = (x_t, y_t)$ in the world frame, respectively. $x$ and $y$ are the axes of the robot frame where the positive $x$ direction is the robot heading direction. $r$ is the distance between the robot and the target position and $\delta$ is the angle between the robot heading and the target direction. $(x_o, y_o)$ is the center of obstacle in the world frame and $r_o$ is the radius of the obstacle. $d$ is the distance between the obstacle and the robot.}
    \label{fig:PhysicalMeaning}
\end{figure}

\subsection{Design of CLF and CBF for Bipedal Robots}
\label{sec:CLF&CBF}
The control Lyapunov function leveraged in the reactive planner proposed in \cite{huang2021efficient} takes into account features specific to bipeds, such as the limited lateral leg motion that renders lateral
walking more laborious than sagittal plane walking. Therefore, we also define the CLF as
\begin{equation}
\label{eq:CLF}
    V(x) = \frac{r^2 + \gamma^2\sin^2(\beta\delta)}{2},
\end{equation}
where $\gamma$ is the weight on the orientation, and $\beta$ controls the size of the field of view (FoV). Given $\Pcal$ and $\Gcal$, we have a closed-form solution for control $u$ in \eqref{eq:ControlSystem}, 
\begin{equation}
\label{eq:CLFSolution}
\begin{aligned}
    \omega^\mathrm{ref} &= \frac{r\cos(\delta) \left[ rv_\delta\cos(\delta)-v_r\sin(\delta) \right]}{\alpha + r^2 \cos^2(\delta) }\\
    v_y^\mathrm{ref} &= \frac{\alpha(v_r\sin(\delta)-rv_\delta\cos(\delta))}{r^2{\cos(\delta)}^2+\alpha}\\
    v_x^\mathrm{ref} &= \frac{v_r\cos(\delta)r^2+\alpha v_\delta\sin(\delta)r+\alpha v_r\cos(\delta)}{r^2{\cos(\delta)}^2+\alpha};
\end{aligned}
\end{equation}
where $v_r$ and $v_\delta$ are defined as:
\begin{equation}
\label{eq:feedback}
\begin{aligned}
v_r &= k_{r1}\frac{r}{k_{r2}+r}\\
v_\delta &= -\frac{2}{\beta}k_{\delta 1}\frac{r}{k_{\delta 2}+r}\sin(2\beta\delta).
\end{aligned}
\end{equation}
In \eqref{eq:CLFSolution} and \eqref{eq:feedback}, $\alpha, \beta, k_{r1}, k_{r2}, k_{\delta1}, k_{\delta2}$ are positive constants. See \cite{huang2021efficient} for more details.

Next, we introduce a candidate CBF as
\begin{equation}
\label{eq:OriginalCBF}
\begin{aligned}
    B(x) = \left[\begin{array}{c} x_{r} - x_o \\ y_{r} - y_o  \end{array}\right]^\top Q  \left[\begin{array}{c} x_{r} - x_o \\ y_{r} - y_o  \end{array} \right] - r_o^2,
\end{aligned}
\end{equation}
where $(x_o, y_o)$ gives the center of the obstacle, $r_o$ specifies the ``radius'' of the obstacle, and $Q$ is positive definite. 
We next verify that \eqref{eq:OriginalCBF} is a valid CBF. 


\subsection{Proof of CBF Validity}
\label{subsec:ProofOfCBF}

Following \cite{ames2019control}, we define the sets 
\begin{equation}
    \label{eq:hGreaterThanZero}
    \begin{aligned}
    {\cal D} &:= \{ x \in \real^3~|~ B(x) \neq -r_o^2, \text{ and } r \neq 0\} \\
    {\cal C} & :=\{ x \in {\cal D} ~|~ B(x) \ge 0 \}
    \end{aligned}    
\end{equation}
associated with the candidate CBF \eqref{eq:OriginalCBF} and note that ${\rm Int}(\cal C) \neq \emptyset$ and $\overline{{\rm Int}(\cal C) }= {\cal C}$. From \cite{ames2019control}, for \eqref{eq:OriginalCBF} to be a valid CBF function of \eqref{eq:ControlSystem}, there must exist some $\eta > 0$, such that,
\begin{equation}
\label{eq:CBFrequirement}
\forall x \in {\cal D}, {\exists u \in 
\real^3}, \dot{B}(x,u) +\eta B(x) \geq 0,
\end{equation}
where $\dot{B}(x,u):=L_fB(x) + L_gB(x)u$ is the time derivative of $B(x)$ along the dynamics of  \eqref{eq:ControlSystem}, $\eta>0$ sets the repulsive effort of the CBF, and
\begin{align}
    \label{eq:LieDerivatives}
    L_fB(x) &:= \frac{\partial B(x) }{\partial x} f(x) \\
    L_gB(x) &:= \frac{\partial B(x) }{\partial x} g(x).
\end{align}

Because the drift term $f(x)$ in \eqref{eq:ControlSystem} is identically zero, the zero control $u\equiv 0$ satisfies \eqref{eq:CBFrequirement} for $x \in {\cal C}$. Hence, we need to show that \eqref{eq:CBFrequirement} can be met for $x \in  \thicksim {\cal C}$, the set complement of ${\cal C}$. Direct application of the chain rule gives that $$L_g B(x) = a(x) b(x) g(x),$$ where
\begin{equation}
\label{eq:LgB}
\begin{aligned}
a(x) & :=  \small 2\left[\begin{array}{ll} x_{t} - r\cos(\delta+\theta) - x_o, & y_{t} - r\sin(\delta+\theta) - y_o\end{array}\right]Q \medskip \\
& ~= 2 \left[ \begin{array}{ll}x_r - x_o, &y_r - y_o\end{array} \right] Q \\
    b(x)&:=     \left[ \begin{array}{ccr}
                     -\cos(\delta+\theta) & r\sin(\delta+\theta) & r\sin(\delta+\theta) \\ 
                     -\sin(\delta+\theta) & -r\cos(\delta+\theta) & -r\cos(\delta+\theta) \end{array} \right] \medskip \\ 
 g(x) &~= \left[ \begin{array}{ccr}
                     -\cos(\delta) & -\sin(\delta) & 0 \medskip \\ 
                     \frac{\sin(\delta)}{r} & -\frac{\cos(\delta)}{r} & 1  \medskip\\
                     0 & 0 & -1
                \end{array} \right].
\end{aligned}
\end{equation}
Moreover, $a(x)$ only vanishes at the center of an obstacle, the rows of $b(x)$ are linearly independent for all $r>0$, and 
${\rm det} \left( g(x) \right) = -\frac{1}{r} \neq 0$ for all $0 < r < \infty$. It follows that for all $x \in {\cal D}$, $L_g B(x) \neq 0$ and hence \eqref{eq:CBFrequirement} is satisfied, proving that \eqref{eq:OriginalCBF} is a valid CBF.

\begin{figure}[t]
    \centering
    \includegraphics[width = 0.5\textwidth]{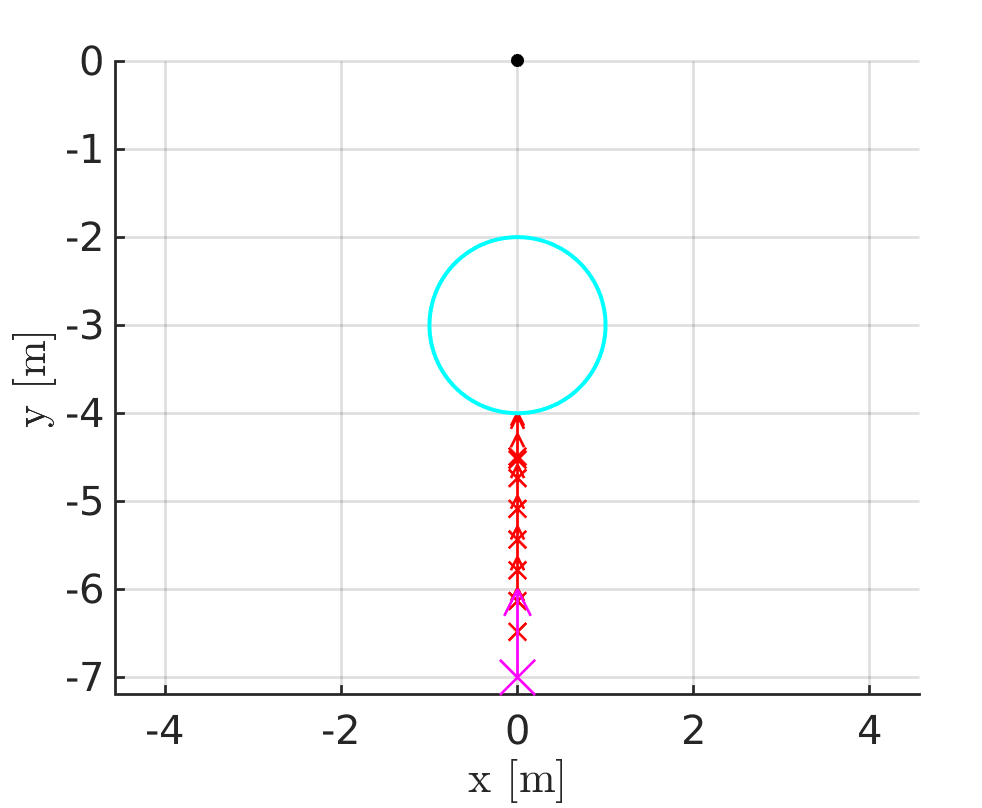}
    \caption{Illustration of a case when the robot directly faces the obstacle and the target creates an equilibrium in the continuous-time system. In a simulation with discrete-time control updates, the robot walks back and forth at the obstacle boundary.}
    \label{fig:Equilibrium1}
\end{figure}

\subsection{Quadratic Program of the Proposed CLF-CBF System}
\label{subsec:QP-of-cbf-clf}
A quadratic program (QP) is set up to optimize the control $u$ with the slack variable $s$ while enforcing both the CLF and CBF constraints. Let $\mathfrak{L}(x,u,s)$ be the CLF constraints
\begin{equation}
\label{eq:QP-clf-constraint}
    \mathfrak{L}(x,u,s) := L_fV(x)+L_gV(x)u+\mu  V(x)-s \le 0,
\end{equation}
where $L_pq(x):=\nabla q(x) \cdot p(x)$ is the Lie derivative, $\mu$ serves as a decay rate of the upper bound of $V(x)$. Next, we denote $\mathfrak{B}(x,u)$ the CBF constraints
\begin{equation}
\label{eq:QP-cbf-constraint}
    \mathfrak{B}(x,u) :=-L_fB(x)-L_gB(x)u-\eta B(x) \le 0,
\end{equation}
where $\eta$ serves as a decay rate of the lower bound of $B(x)$.

Finally, the QP for the control values is formulated as
\begin{equation}
u^*, s^*=\argmin_{\substack{\mathfrak{L}(x,u,s)\leq 0 \\ \mathfrak{B}(x,u)\leq 0}}\ \mathfrak{J}(u,s),
\label{eq:CBLF-QP}
\end{equation}
where the cost function $\mathfrak{J}(u,s)$ is defined as
\begin{equation}
\label{eq:QP-cost}
    \mathfrak{J}(u,s):=\frac{1}{2}(u-u^\mathrm{ref})^TH(u-u^\mathrm{ref})+\frac{1}{2}ps^2,
\end{equation}
the positive definite, diagonal matrix $H := {\rm diag}([h_1, h_2, h_3])$ weights the control variables, $u^\mathrm{ref}:=\begin{bmatrix} v_{x}^\mathrm{ref} & v_{y}^\mathrm{ref} & \omega^\mathrm{ref} \end{bmatrix}^\transpose$ is the control vector from the CLF \eqref{eq:CLF} without obstacles, and $p\ge 0$ is the weight of the slack variable, $s$.

In the proposed CLF-CBF-QP system, $u^\mathrm{ref}$ is the closed-form solution obtained from the CLF without obstacles, and $H$ assigns weights for different control variables. The proposed CLF-CBF-QP cost function captures inherent features of a Cassie-series robot, such as the low-cost of longitudinal movement and high-cost of lateral movement, while guaranteeing safety. We next look at liveness, that is, the ability of the system to reach the desired goal.

\begin{figure}[t]
    \centering
    \includegraphics[width = 0.5\textwidth]{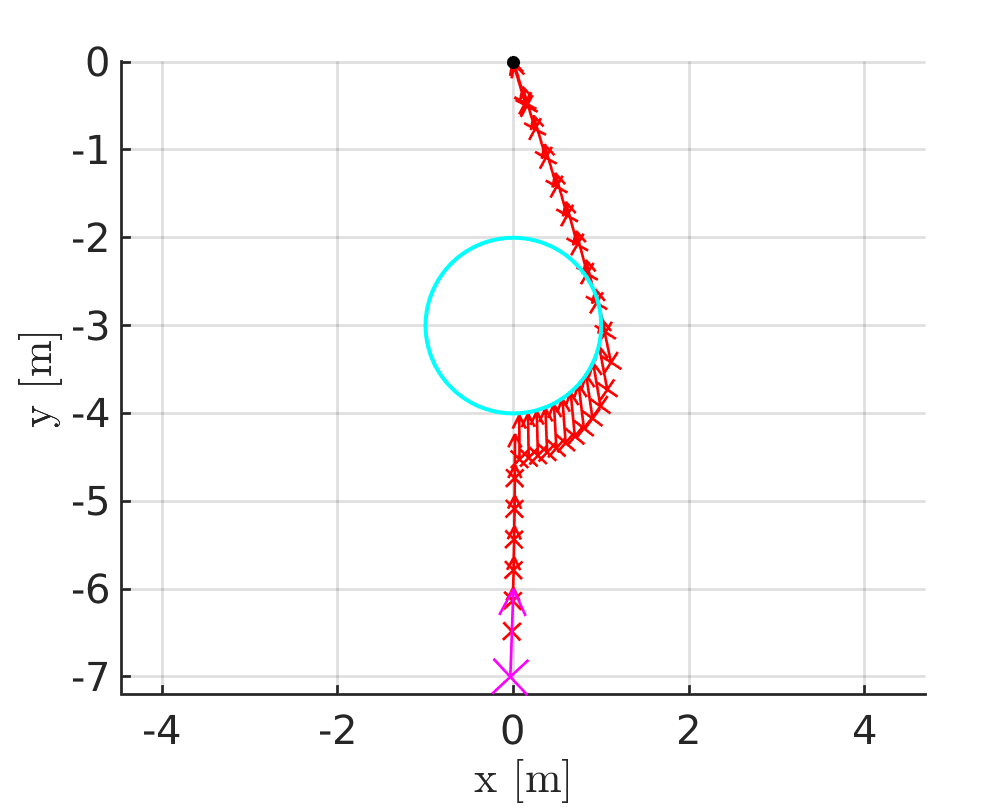}
    \caption{Illustration of breaking the equilibrium by using $u^\mathrm{ref}_2$: $\begin{bmatrix}
        v_{x}^\mathrm{ref} & v_{y}^\mathrm{ref} & \omega^\mathrm{ref} + \epsilon
    \end{bmatrix}^\transpose$ when $\delta=0$. The robot successfully reaches the target position without colliding with the obstacle.}
    \label{fig:BreakEquilibrium}
\end{figure}

\begin{figure*}[t!]%
    \centering
    \subfloat[]{%
    \label{fig:nonsmooth1}%
    \includegraphics[trim=0 0 0 0,clip,width=0.5\columnwidth]{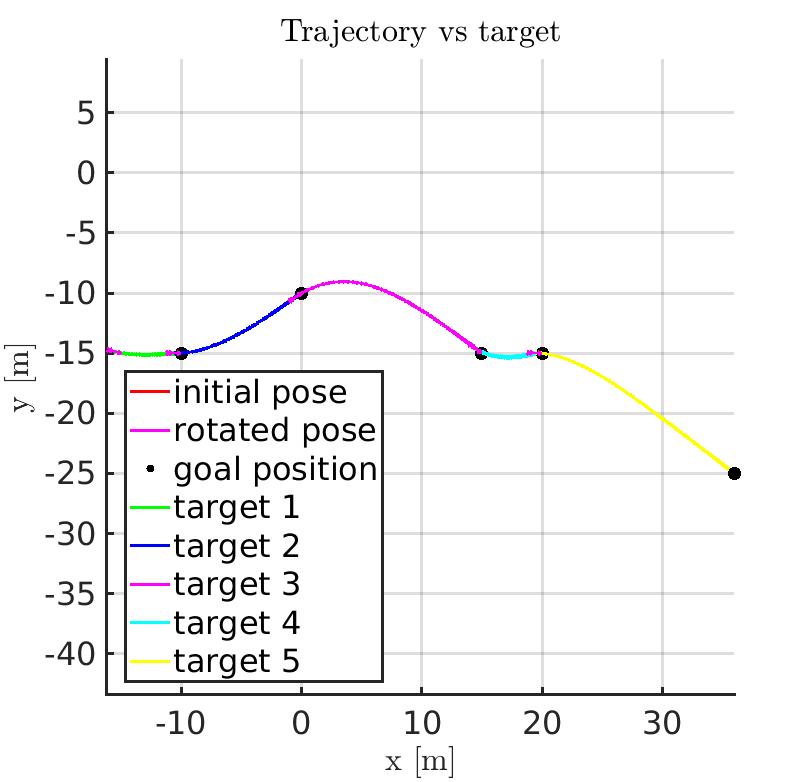}}~
    \subfloat[]{%
    \label{fig:nonsmooth2}%
    \includegraphics[trim=0 0 0 0,clip,width=0.5\columnwidth]{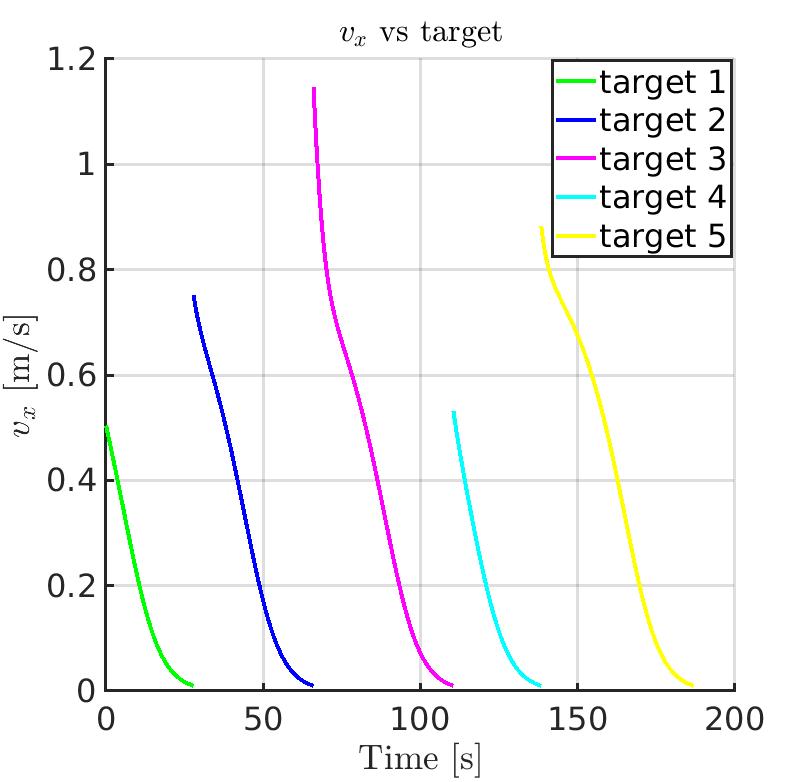}}~
    \subfloat[]{%
    \label{fig:nonsmooth3}%
    \includegraphics[trim=0 0 0 0,clip,width=0.5\columnwidth]{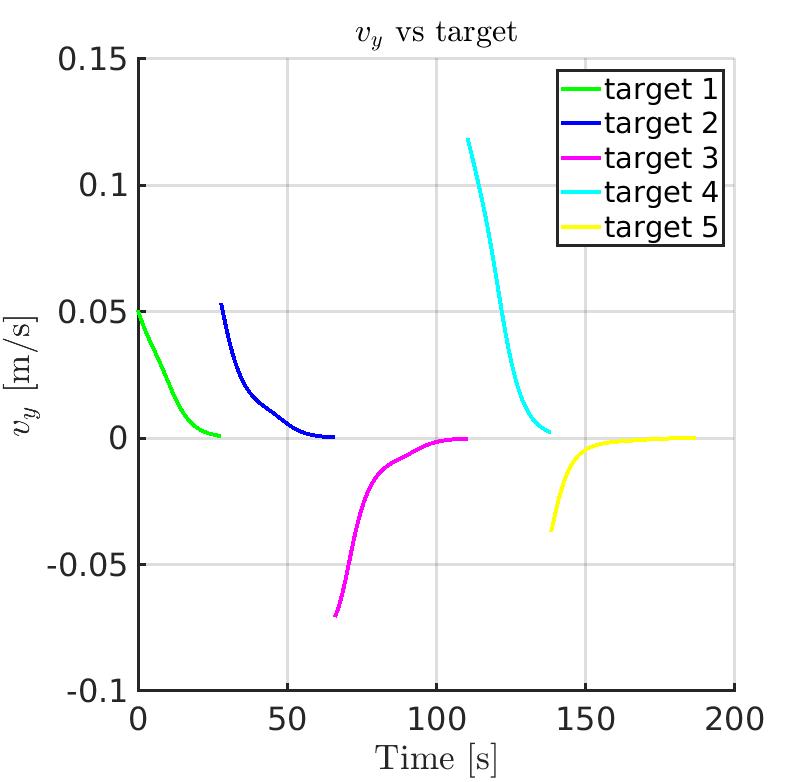}}~
    \subfloat[]{%
    \label{fig:nonsmooth4}%
    \includegraphics[trim=0 0 0 0,clip,width=0.5\columnwidth]{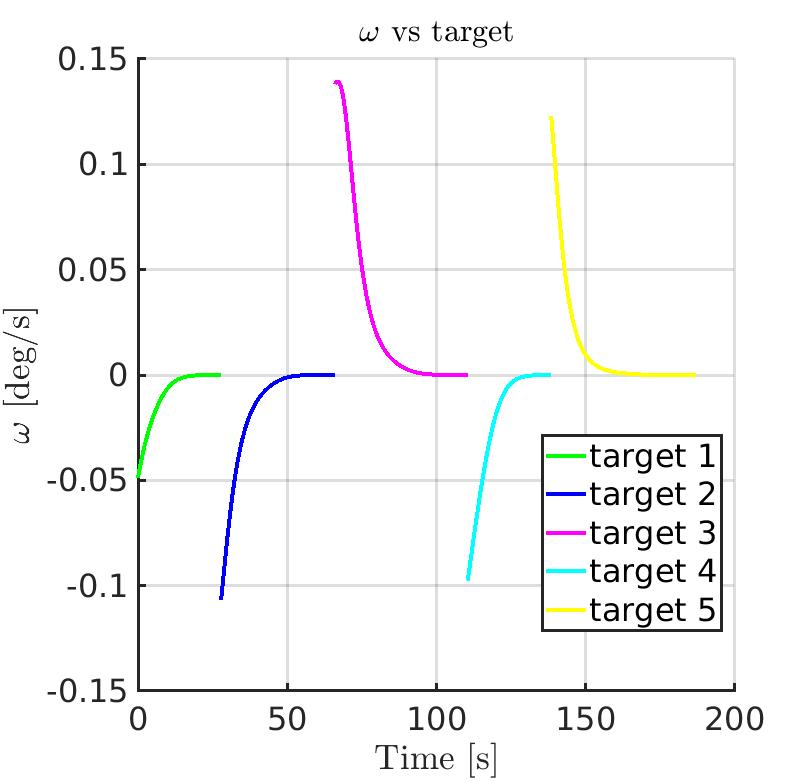}}//
    \subfloat[]{%
    \label{fig:smooth1}%
    \includegraphics[trim=0 0 0 0,clip,width=0.5\columnwidth]{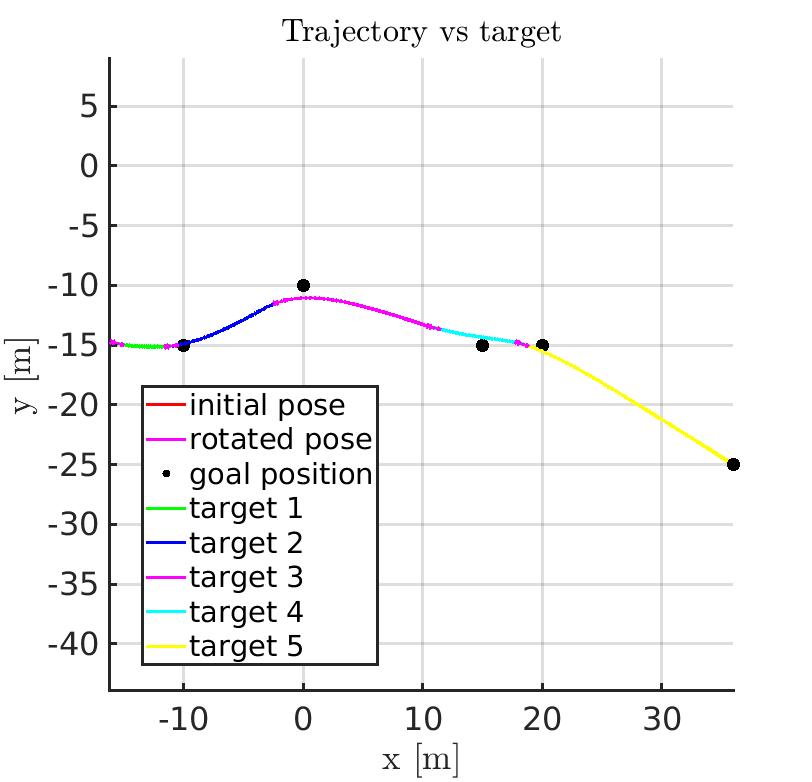}}~
    \subfloat[]{%
    \label{fig:smooth2}%
    \includegraphics[trim=0 0 0 0,clip,width=0.5\columnwidth]{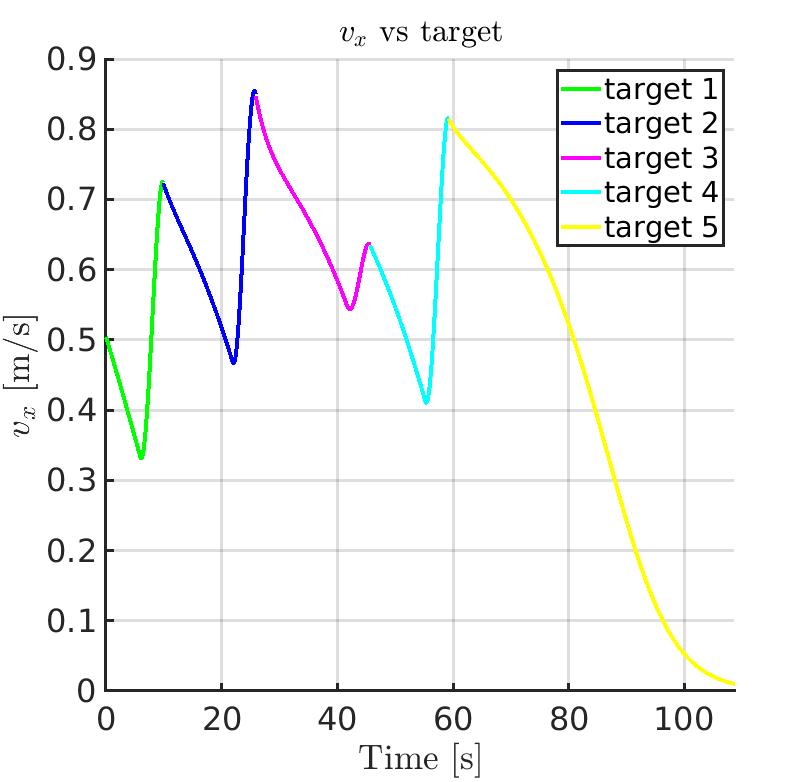}}~
    \subfloat[]{%
    \label{fig:smooth3}%
    \includegraphics[trim=0 0 0 0,clip,width=0.5\columnwidth]{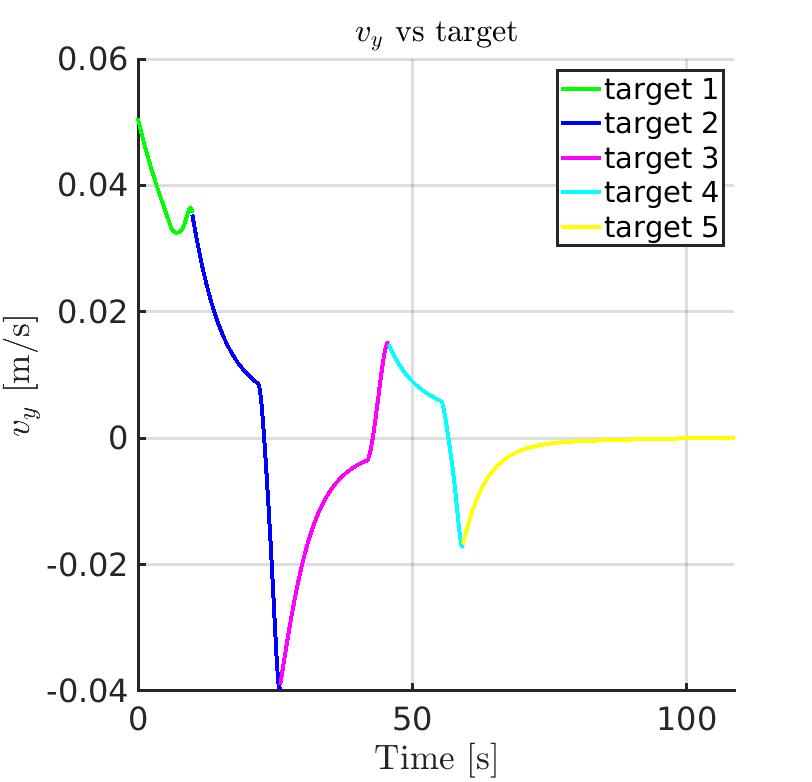}}~
    \subfloat[]{%
    \label{fig:smooth4}%
    \includegraphics[trim=0 0 0 0,clip,width=0.5\columnwidth]{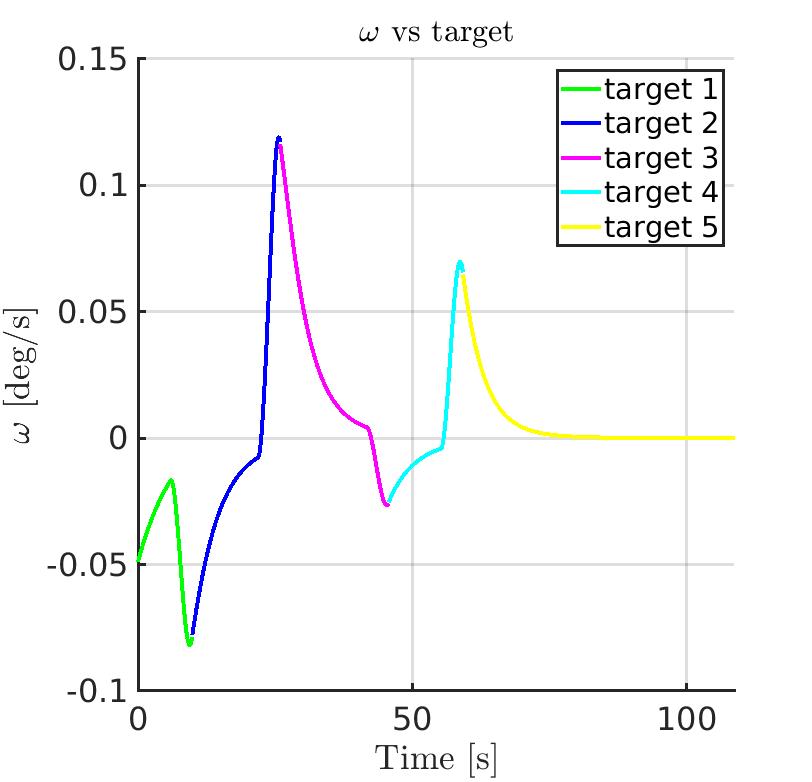}}
    \caption[]{The top and bottom rows are the resulting trajectories and control variables generated by \eqref{eq:CLFSolution} and \eqref{eq:smooth-uref}, respectively. The robot starts at $(-15, -15, -15^\circ)$, and the list of targets to reach is $(-10,-15)$, $(0,-10)$, $(15,-15)$, $(25,-15)$, and the final destination is $(40,-30)$.  Different colors represent the trajectory with different targets and also correspond to control variables for the trajectory. The control variables generated by \eqref{eq:CLFSolution} decrease to zero each goal is approached, which leads to discontinuous control variables.
    On the other hand, with the smoothing process, the control variables are continuous and much smoother, while the resulting trajectories are similar.}
    \label{fig:nonsmooth target switch}
\end{figure*}

\subsection{Analysis for Unwanted Equilibria}
Paper \cite{reis2020control} points out very clearly that the CLF-CBF-QP formulation of Sec.~\ref{subsec:QP-of-cbf-clf} can introduce unwanted equilibria that may prevent the robot from reaching a goal state. The paper \cite{LR:EquilibriaMultiAgent} also considered this problem and noted that if the equilibria are unstable, then liveness is preserved for almost all initial conditions. In \ref{sec:EquilibriumAnalysis}, we follow the KKT-analysis of the CLF-CBF-QP presented in \cite{reis2020control} and show that only one equilibrium point is created by the QP. Moreover, the equilibrium occurs at an obstacle boundary for $\delta=0,d_y=0,d_x>0$, in other words, when the robot's heading faces directly to the obstacle and the target, as shown in Fig.~\ref{fig:Equilibrium1}. The robot will move directly toward the obstacle and stop at the obstacle boundary.

\begin{remark}
When the robot encounters the above equilibrium state, we can add a constant $\epsilon>0$ to $u^\mathrm{ref}$ in \eqref{eq:CBLF-QP} such that $u^\mathrm{ref}=\begin{bmatrix}
        v_{x}^\mathrm{ref} & v_{y}^\mathrm{ref} & \omega^\mathrm{ref} + \epsilon
    \end{bmatrix}^\transpose$.
As is shown in Fig.~\ref{fig:BreakEquilibrium}, the robot breaks its equilibrium state, avoids the obstacle, and reaches the target position. This is related to, but distinct from, the method presented in \cite{reis2020control} for resolving unwanted equilibria.
\end{remark}





\section{Combining CBFs for Multiple Obstacles}
\label{sec:Multi-CBF}
So far, we have assumed there is only one obstacle perceived by the robot. In this section, we will discuss how to handle multiple obstacles in the environment when each obstacle is a positive distance apart from the others \cite{romdlony2016stabilization}. Specifically, for $i \in \{ 1, 2, \dots, M\}$, suppose that 
\begin{equation}
\label{eq:FamilyCBFs}
\begin{array}{l}
B_i(x):= \left[\begin{array}{c} x_{r} - x_{o,i} \\ y_{r} - y_{o,i}  \end{array}\right]^\top Q_i  \left[\begin{array}{c} x_{r} - x_{o,i}\\ y_{r} - y_{o,i}  \end{array} \right] - r_{o,i}^2\\
\\
{\cal D}_i := \{ x \in \real^3~|~ B_i(x) \neq -r_{o,i}^2, \text{ and } r \neq 0\} \\
\\
    {\cal C}_i  :=\{ x \in {\cal D}_i ~|~ B_i(x) \ge 0 \}
    \end{array}
\end{equation}
are valid CBF functions for the dynamics \eqref{eq:ControlSystem}. For $i \neq j$, the obstacles corresponding to 
 $B_i:\real^3 \to \real $ and $B_j:\real^3 \to \real$ are a \textbf{positive distance apart} if
\begin{equation}
    \label{eq:PosDistApart}
   \Delta_{ij}:= \underset{ \begin{array}{c} x \in \sim {\cal C}_i \\ y \in \sim {\cal C}_j\end{array} }{\inf } ||x - y|| >0.
\end{equation}

A key innovation with respect to \cite{glotfelter2017nonsmooth} is that we will compose the associated CBFs in a smooth ($C^1$) manner. A potential drawback with respect to \cite{glotfelter2017nonsmooth} is that we will assume the obstacles giving rise to the CBFs are a positive distance apart. Similar to \cite{romdlony2016stabilization}, we saturate standard quadratic CBFs before seeking to combine them. Distinct from \cite{romdlony2016stabilization}, we multiply the saturated CBFs instead of creating a weighted sum. This greatly simplifies the analysis of the composite CBF with respect to all previous works.

\subsection{Smooth Saturation Function}

We introduce a continuously differentiable saturation function that will allow us to compose in a simple manner CBFs corresponding to obstacles that are a positive distance apart. Consider $\sigma: \real \to \real$ by
\begin{equation}
    \label{eq:SmoothSaturationFunction}
    \sigma(s) : = \begin{cases}
        s & s \le 0 \\
        s(1 + s - s^2) & 0 < s < 1 \\
        1 & s \ge 1.
    \end{cases}
\end{equation}
Then straightforward calculations show that for all $s \in \real$, $\frac{d\sigma(s)}{ds}$ exists and satisfies
\begin{equation}
    \label{eq:deltaSmoothSaturationFunction}
   \frac{d\sigma(s)}{ds} : = \begin{cases}
        1 & s \le 0 \\
        1 +2 s - 3s^2 & 0 < s < 1 \\
        0 & s \ge 1.
    \end{cases}
\end{equation}
Upon noting that $\left.\frac{d\sigma(s)}{ds}\right|_{s=0}=1$, $\left.\frac{d\sigma(s)}{ds}\right|_{s=1}=0$ and for all $0 < s< 1$, $ 0 < \frac{d\sigma(s)}{ds}$, it follows that $\sigma: \real \to \real$ is continuously differentiable and monotonic.

\begin{remark} For $0 \le s \le 1 $, $\sigma$ is constructed from a degree-three B\'ezier polynomial $p:[0,1] \to \real$ such that $p(0)=0$, $\frac{dp(0)}{ds}=1$, $p(1)=1$, $\frac{dp(1)}{ds}=0$. Moreover, for $0 < s < 1$, $\frac{dp(s)}{ds}> 0$. 
\end{remark}

\begin{figure}[t!]
    \centering
    \includegraphics[width = 0.4\textwidth]{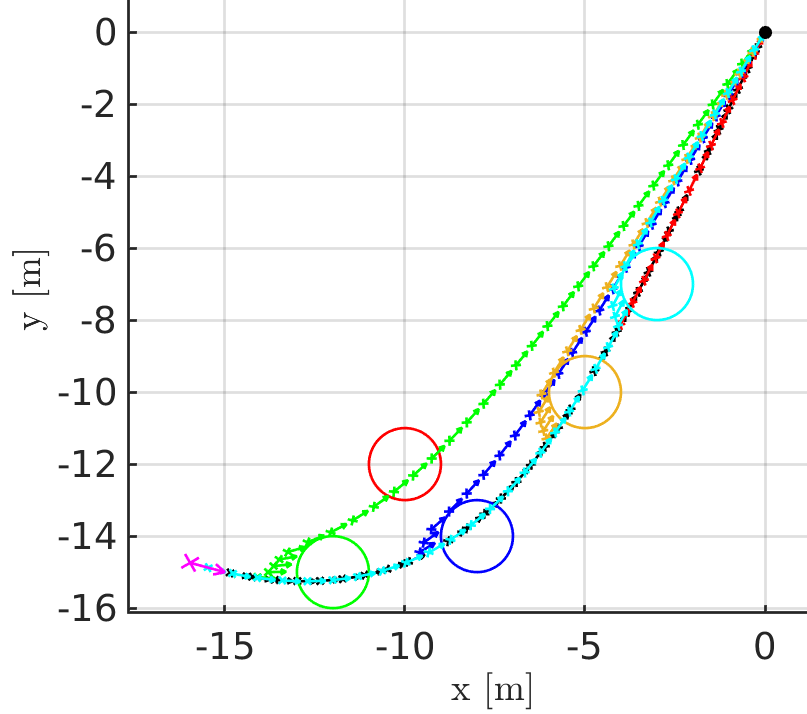}
    \caption{Illustration of how the trajectories vary as a function of different obstacle positions. The target (marked in black) and the robot pose $(-15,-15,-15^\circ)$ are fixed throughout all of the simulations. The different colors correspond to simulations with a single (different) obstacle present at a time. The black trajectory is generated without any obstacle present by only including the CLF constraint in the QP.}
    \label{fig:result1}
\end{figure}

\begin{definition}
    For $\kappa >0$, we define $\sigma_\kappa: \real \to \real$ by 
    \begin{equation}
        \sigma_\kappa(s):= \sigma(\frac{s}{\kappa}).
    \end{equation}
\end{definition}

\begin{proposition} 
\label{prop:Kappas}
Suppose that $\kappa>0$ and $B:{\cal D} \to \real$ is a candidate CBF  with ${\cal D}$ and ${\cal C}$ defined as in \eqref{eq:hGreaterThanZero}. Then $\sigma_\kappa \circ B:{\cal D} \to \real$ is a valid CBF for the system \eqref{eq:ControlSystem} if, and only if, 
$ B: {\cal D} \to \real $ is a valid CBF.  
\end{proposition}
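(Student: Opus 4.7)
The plan is to exploit three structural properties of $\sigma_\kappa$: it is $C^1$; it is monotone non-decreasing with $\sigma_\kappa(0)=0$, so $\sigma_\kappa(s)$ has the same sign as $s$; and on the half-line $\{s\le 0\}$ it coincides with the linear map $\sigma_\kappa(s)=s/\kappa$, with constant derivative $1/\kappa$. Setting $\tilde B:=\sigma_\kappa\circ B$, the chain rule combined with $f\equiv 0$ in \eqref{eq:ControlSystem} (so $L_f B\equiv L_f \tilde B\equiv 0$) reduces the CBF requirement \eqref{eq:CBFrequirement} for $\tilde B$ to
\begin{equation*}
\sigma_\kappa'(B(x))\,L_g B(x)\,u \;+\; \eta\,\sigma_\kappa(B(x)) \;\ge\; 0,
\end{equation*}
whereas the analogous condition for $B$ is $L_g B(x)\,u+\eta B(x)\ge 0$.

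My next step would be to split on the sign of $B(x)$. On the region $\{x\in{\cal D}\mid B(x)\le 0\}$, substituting $\sigma_\kappa'(B)=1/\kappa$ and $\sigma_\kappa(B)=B/\kappa$ shows that the two inequalities differ only by the positive factor $1/\kappa$, so the very same $u$ and the very same $\eta$ satisfy one if and only if they satisfy the other. On the region $\{x\in{\cal D}\mid B(x)\ge 0\}$, both $B(x)$ and $\sigma_\kappa(B(x))$ are non-negative, so the trivial choice $u=0$ fulfills either inequality for any $\eta>0$; in particular the saturated tail $\{B(x)\ge\kappa\}$, where $\sigma_\kappa'(B)=0$ annihilates the control term in $\dot{\tilde B}$, is absorbed into this case because $\sigma_\kappa(B)=1>0$ there. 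Combining the two regions settles both implications with a shared constant $\eta_B=\eta_{\tilde B}$: for $(\Leftarrow)$, the $u$ supplied by the $B$-inequality on $\{B\le 0\}$ together with $u=0$ on $\{B\ge 0\}$ verifies the $\tilde B$-inequality; the reverse implication is literally symmetric. As a by-product, since $\sigma_\kappa(s)\ge 0\Leftrightarrow s\ge 0$, the safe sets of $B$ and $\tilde B$ coincide with ${\cal C}$.

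The chain rule does essentially all of the work, so the main obstacle is bookkeeping rather than analysis: one must track which $u$ transfers between the two inequalities in each region and confirm that nothing pathological occurs at the transition points $B(x)=0$ and $B(x)=\kappa$, where $\sigma_\kappa$ is only $C^1$ rather than analytic. The deliberate construction of $\sigma_\kappa$ as a $C^1$ function with strictly positive slope on $(-\infty,\kappa)$ and zero slope beyond makes both transitions benign, and $\tilde B\in C^1({\cal D})$ is inherited directly from the composition, so $\tilde B$ qualifies as a candidate CBF on ${\cal D}$ with no additional verification required.
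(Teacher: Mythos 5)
Your proposal is correct and follows essentially the same route as the paper: split on the sign of $B(x)$, use $u=0$ together with $f\equiv 0$ on the safe region, and on $\{B\le 0\}$ apply the chain rule to see that $\sigma_\kappa\circ B$ and $B$ have gradients (hence CBF inequalities) differing only by the positive factor $1/\kappa$. You are somewhat more explicit than the paper about both directions of the equivalence and about the benign behavior at the transition points, but the underlying argument is identical.
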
 


\begin{proof} 
For $x \in {\cal C}$, $\sigma_\kappa \circ B(x)>0$ and hence satisfies \eqref{eq:CBFrequirement} for $u = 0$. For $ x \in \sim {\cal C}$, by the chain rule and the construction of $\sigma:\real \to \real$,
\begin{equation}
   \frac{\partial \sigma_\kappa \circ B(x) }{\partial x} =  
   \left. \frac{d\sigma(s)}{ds} \right|_{s = \frac{B(x)}{\kappa}} \frac{\partial B(x) }{\partial x} =  \frac{1 }{\kappa} \frac{\partial B(x) }{\partial x}.   
\end{equation}
Hence, the proof of Sect.~\ref{subsec:ProofOfCBF} applies.     
\end{proof}

\begin{figure}[t]
    \centering
    \includegraphics[width = 0.4\textwidth]{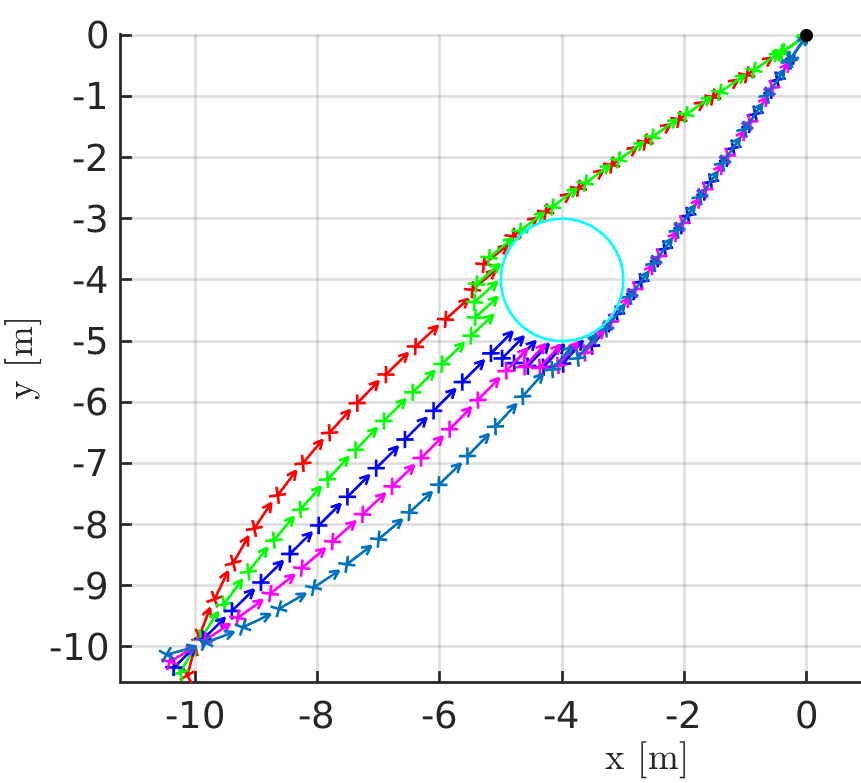}
    \caption{Illustration of how the trajectories vary as a function of different robot orientations with a fixed obstacle location. The target (marked in cyan) and the obstacle at $(-4,-4)$ are fixed throughout all simulations. Each color stands for a different initial robot orientation.}
    \label{fig:result2}
\end{figure}

\begin{proposition} Suppose for $1 \le i \le M$, the CBFs $B_i(x):\real^3 \to \real$ are a positive distance apart. Then there exist $\kappa_1>0$, $\kappa_2>0$, $\ldots$, $\kappa_M>0$, such that for all $i \neq j$,
\begin{equation}
    \label{eq:ChooseKappas}
  \{ x\in \real^3~|~\sigma_{\kappa_i} \circ B_i(x) < 1 \} \cap  \{ x\in \real^3 ~|~  B_j(x) < 0 \} = \emptyset.
\end{equation}    
\end{proposition}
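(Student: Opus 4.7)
The plan is to exploit the strict monotonicity of the saturation $\sigma$ in \eqref{eq:SmoothSaturationFunction} to convert the set-containment in \eqref{eq:ChooseKappas} into a sublevel-set condition on $B_i$ alone, and then use the positive-distance-apart hypothesis \eqref{eq:PosDistApart} to extract a uniform positive lower bound on $B_i$ over $\sim \Ccal_j$.

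First I would note that $\sigma$ is strictly increasing on $(-\infty,1]$ and identically equal to $1$ on $[1,\infty)$, so $\sigma(t) < 1 \Leftrightarrow t < 1$. Applied with $t = B_i(x)/\kappa_i$, this yields $\sigma_{\kappa_i}(B_i(x)) < 1 \Leftrightarrow B_i(x) < \kappa_i$. Consequently, \eqref{eq:ChooseKappas} reduces to choosing, for each $i$, a $\kappa_i > 0$ satisfying $\kappa_i \le m_{ij} := \inf_{x \in \sim \Ccal_j} B_i(x)$ for every $j \neq i$.

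The central step is then to show $m_{ij} > 0$ whenever $i \neq j$. I would argue by contradiction. If $m_{ij} = 0$, there is a sequence $(x_n) \subset \sim \Ccal_j$ with $B_i(x_n) \to 0$. The inequality $B_j(x_n) < 0$ confines the planar Cartesian coordinates of $x_n$ to the bounded ellipse determined by $Q_j$ and $r_{o,j}$, so after passing to a subsequence these planar coordinates converge to a point $p^\star$ at which $B_i$ vanishes; i.e., $p^\star$ lies on the boundary of the planar obstacle $i$. Lifting $p^\star$ back to $\real^3$ and choosing the orientation coordinates to match those of the approaching $x_n$ produces pairs of points in $\sim \Ccal_i$ and $\sim \Ccal_j$ whose $\real^3$ separation tends to $0$, contradicting $\Delta_{ij} > 0$. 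Setting $\kappa_i := \tfrac{1}{2}\min_{j \neq i} m_{ij} > 0$ then satisfies the reduced condition of the first step and establishes \eqref{eq:ChooseKappas}.

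The main technical point I expect to need care is the lifting step in the contradiction argument. Although $\sim \Ccal_i$ and $\sim \Ccal_j$ are unbounded in the angular directions of $\real^3$, each $B_\ell$ depends on the state only through the planar Cartesian position, so the orientation degrees of freedom are available to match between the sequences and shrink the $\real^3$ separation between approaching points to zero. Once this observation is pinned down, the planar limit immediately becomes an $\real^3$-distance contradiction, and the rest of the argument is bookkeeping.
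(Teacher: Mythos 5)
Your proof is correct and reaches the conclusion by a genuinely different route from the paper's. Both arguments start identically, using $\sigma(s)<1\iff s<1$ to rewrite $\{\sigma_{\kappa_i}\circ B_i<1\}$ as the sublevel set $\{B_i<\kappa_i\}$. The paper then argues geometrically: it sets $\Delta_i:=\min_{j\neq i}\Delta_{ij}$, exhibits a threshold as an extremal value of $B_i$ over the $\Delta_i$-neighborhood of $\sim{\cal C}_i$, and concludes because that neighborhood cannot meet any $\sim{\cal C}_j$. You instead target the quantity the statement actually needs, $m_{ij}=\inf_{x\in\sim{\cal C}_j}B_i(x)$, and prove its positivity by sequential compactness (boundedness of the planar ellipse $\{B_j<0\}$, continuity of $B_i$, and lifting the planar limit point back to state space to contradict $\Delta_{ij}>0$). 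Your route is the more careful one: the paper's displayed threshold is a \emph{supremum} of $B_i$ over $\{d(x,\sim{\cal C}_i)\le\Delta_i\}$, whereas its claimed containment really requires the \emph{infimum} of $B_i$ over the complementary region (these differ for anisotropic $Q_i$), a point your argument sidesteps entirely. What the paper's route buys is a bound expressed directly in terms of the separation $\Delta_i$, which is what the implementation uses ($\kappa_i=\min_i\Delta_i^2$ with $Q=I$). Two small items to tighten: the negation of $m_{ij}>0$ also includes $m_{ij}<0$, which you should dispatch in a line (it would place a point of $\sim{\cal C}_j$ inside $\sim{\cal C}_i$, forcing $\Delta_{ij}=0$); and in the lifting step the planar-to-state map is continuous only away from $r=0$, so the degenerate case where $p^\star$ coincides with the goal should be noted as excluded by ${\cal D}$.
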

\begin{proof} 
By the disjointness property, $\Delta_i :=\underset{ j \neq i}{\min }~~ \Delta_{ij} >0$. \\

For $S \subset \real^3$ and $x \in \real^3$, define the distance from $x$ to $S$ by
\begin{equation}
  d(x, S):= \inf_{y \in S} ||x-y||.  
\end{equation}
Then,  because (i) $B_i$ is continuous, (ii) the set complement of ${\cal C}_i$ is bounded, and (iii) $d(x, \sim {\cal C}_i)>0 \implies B_i(x)>0$, it follows that
\begin{equation}
    m^\ast_i := \underset{d(x, \sim {\cal C}_i) \le \Delta_i}{\sup} ~B_i(x)
\end{equation}
is a finite positive number. Therefore, for all $0 < \kappa_i <  m^\ast_i$, 
\begin{equation}
   \{ x\in \real^3~|~\sigma_{\kappa_i} \circ B_i(x) < 1 \}  \subset \{x \in \real^3~|~ d(x, \sim {\cal C}_i) \le \Delta_i\}, 
\end{equation}
and hence \eqref{eq:ChooseKappas} holds.
\end{proof}

\begin{figure}[b]
    \centering
    \includegraphics[width = 0.5\textwidth]{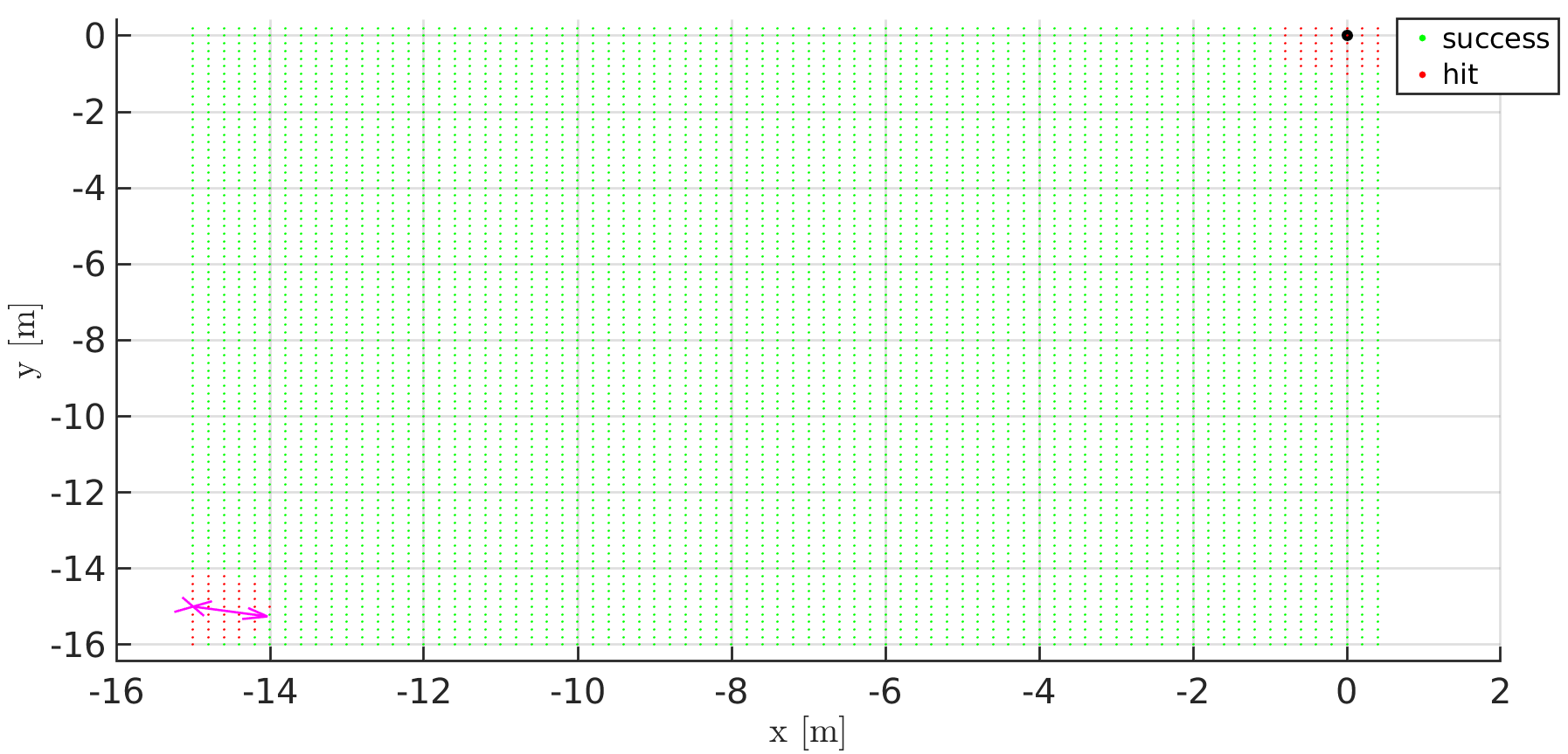}
    \caption{Liveness analysis for the CLF-CBF system. The initial pose is $(-15, -15, -15^\circ)$, and the target is located at $(0,0)$. Each dot in the figure represents the center of an object with radius $(r=1)$. The interval between each center dots are 0.2 meter in both $x$ and $y$ direction. Note that all the red points either originally collide with the robot or the target.}
    \label{fig:ReachabilityAnalysis}
\end{figure}

\begin{figure*}[b]%
    \centering
    \subfloat[]{%
    \label{fig:sim1}%
    \includegraphics[trim=0 0 0 0,clip,width=0.98\columnwidth]{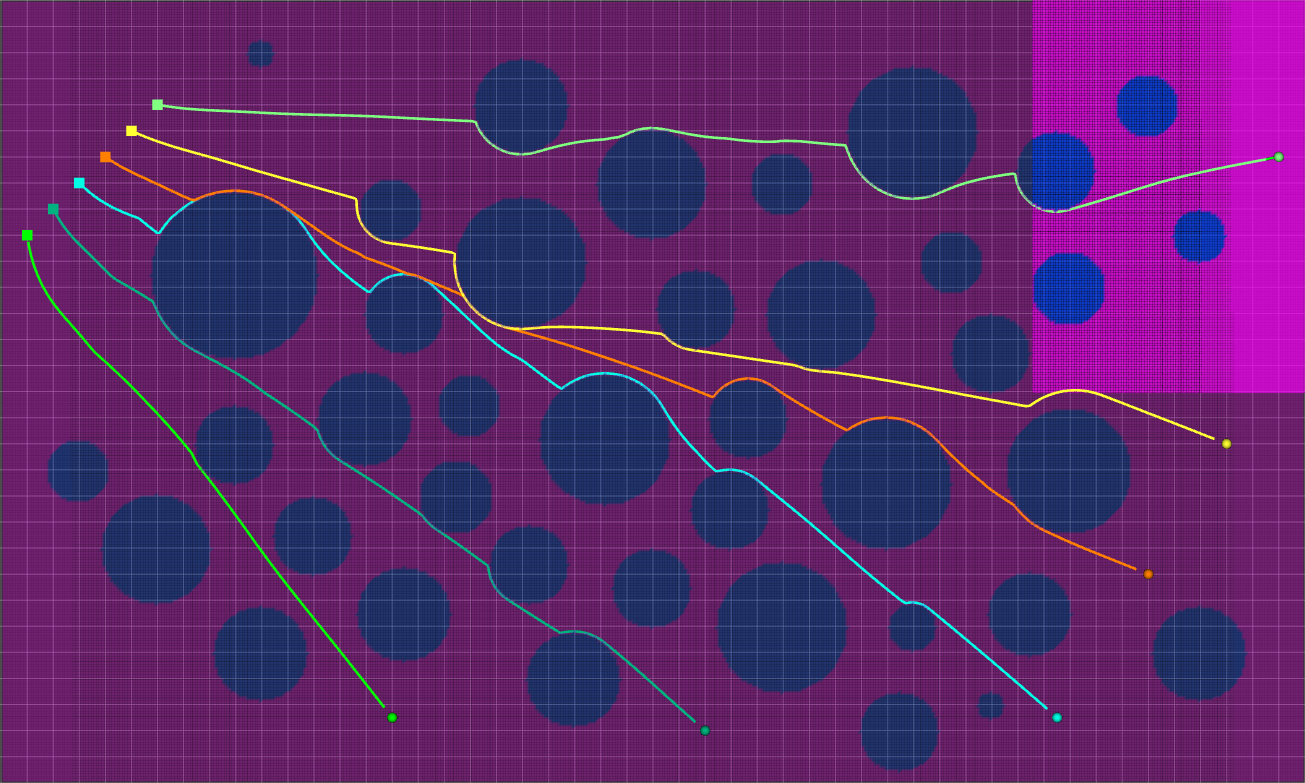}}~
    \subfloat[]{%
    \label{fig:sim2}%
    \includegraphics[trim=0 0 0 0,clip,width=0.98\columnwidth]{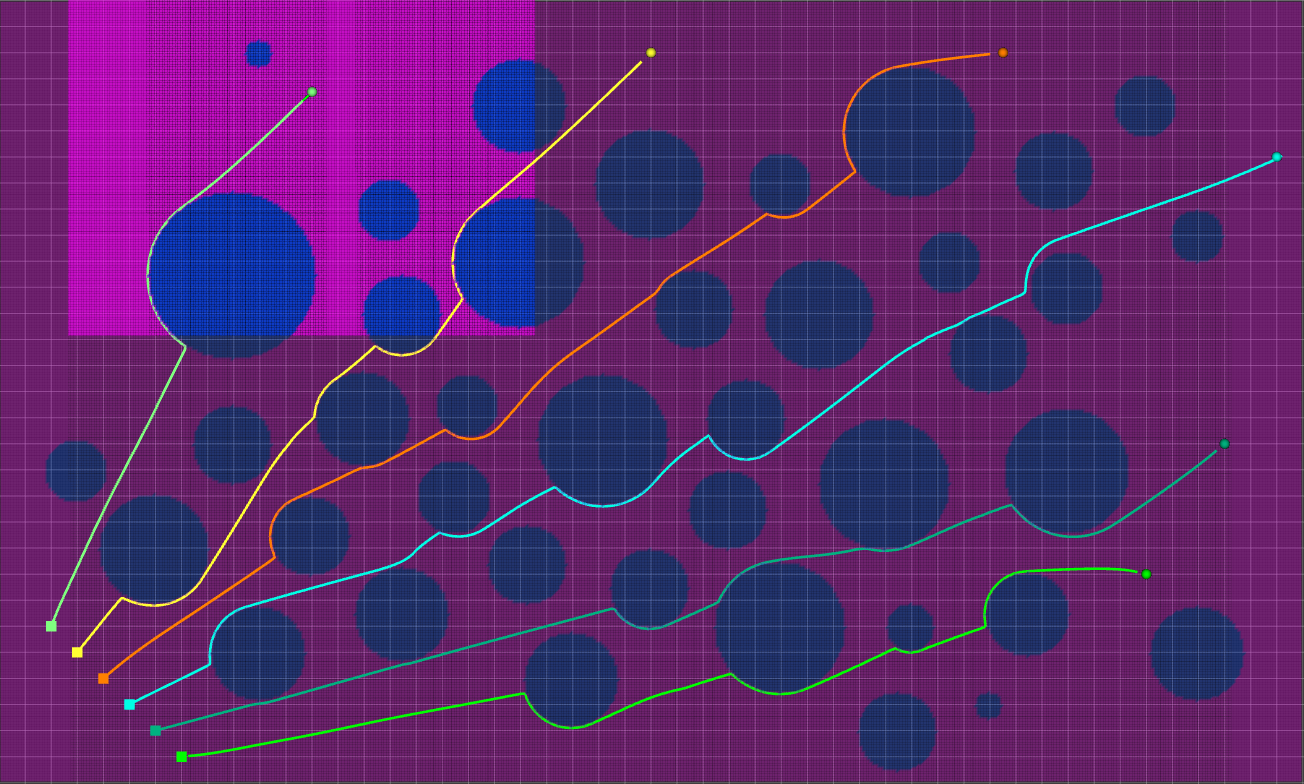}}\\
    \subfloat[]{%
    \label{fig:sim3}%
    \includegraphics[trim=0 0 0 0,clip,width=0.98\columnwidth]{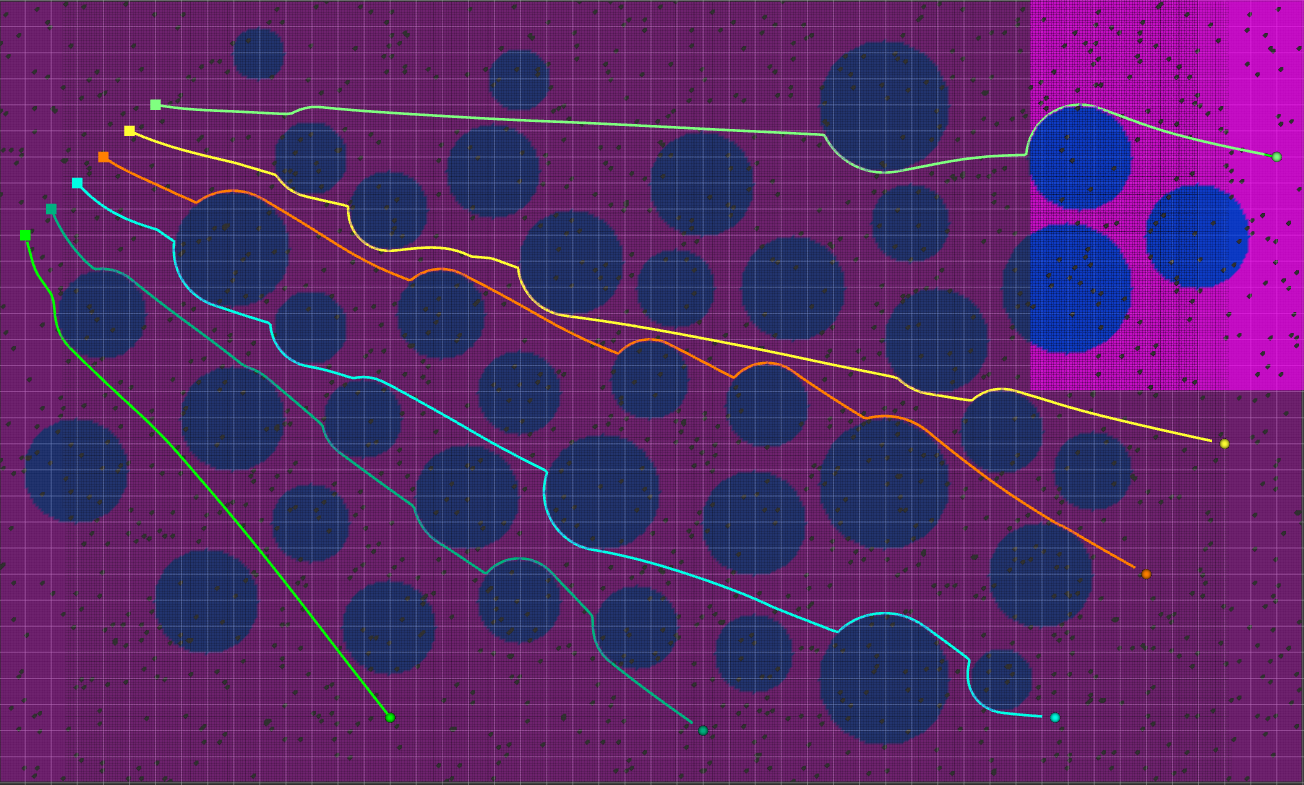}}~
    \subfloat[]{%
    \label{fig:sim4}%
    \includegraphics[trim=0 0 0 0,clip,width=0.98\columnwidth]{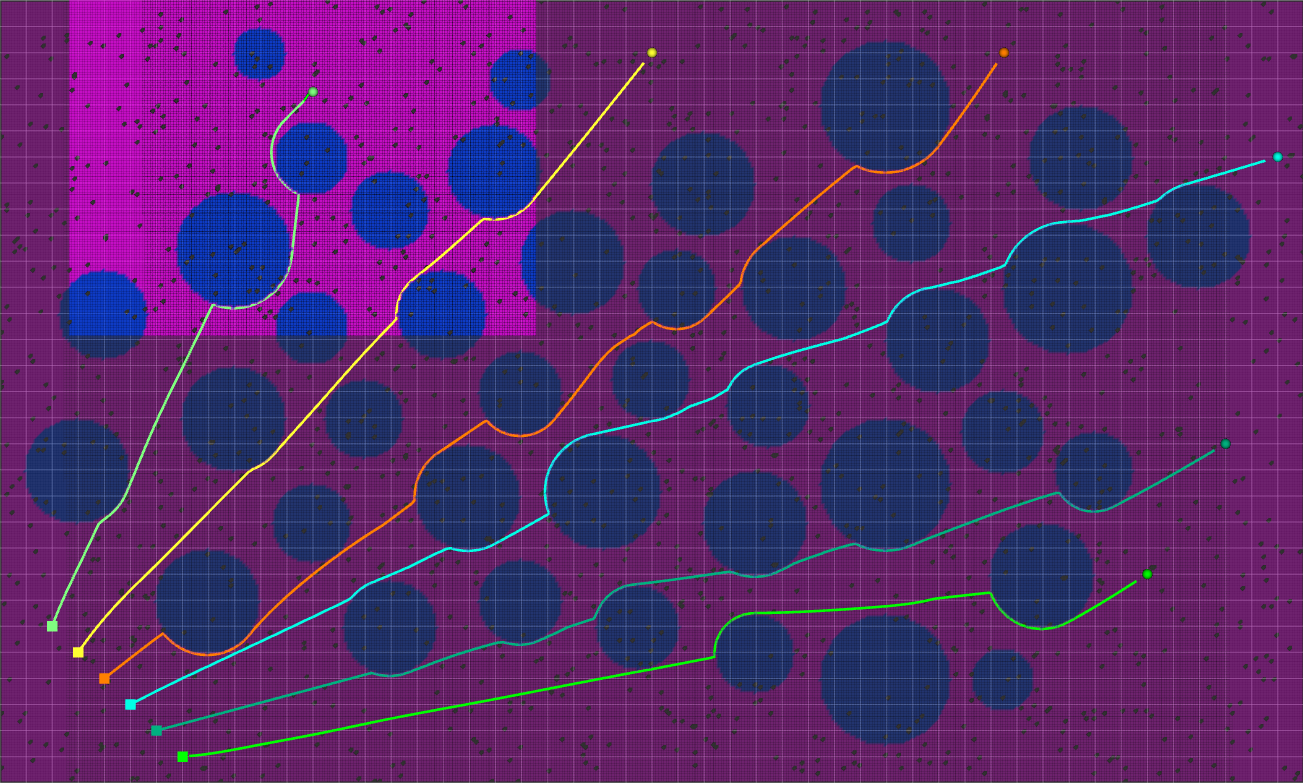}}%
    \caption[]{Robot trajectories with 40 obstacles in the noise-free map 1 (top two) and 40 obstacles in the noisy map 2 (bottom two) synthetic maps with the size of $50\times30$ meters. The highlighted areas are the local map at that specific timestamp. The dark blue circles are the obstacles. Different colors represent different runs in the map.}
    \label{fig:multiple obstacle simulation}
\end{figure*}

\subsection{Multiplication Property of Smooth Saturated CBFs}
 
For $M\ge2$ CBFs corresponding to disjoint obstacles, define the sets 
\begin{equation}
     \label{eq:hGreaterThanZeroMulti}
    \begin{aligned}
       {\cal D}_M &:= \bigcap_{i=1}^{M} {\cal D}_i \\
    {\cal C}_M & :=\bigcap_{i=1}^{M} \{ x \in {\cal D}_M ~|~ B_i(x) \ge 0 \} \\
    & =\bigcap_{i=1}^{M} {\cal C}_i.
    \end{aligned}  
\end{equation}

\begin{theorem}
\label{thm:Product}
Under the assumed disjointness property, the product of smoothly saturated valid CBFs, 
\begin{equation}
    \label{eq:MultiCBF}
    B_M(x) := \prod_{i=1}^{M} \sigma_{\kappa_i} \circ B_i(x),
\end{equation}
is a valid CBF for ${\cal D}_M$, ${\cal C}_M$, and the dynamic system \eqref{eq:ControlSystem}. 
\label{theorem: production property of CBF}
\end{theorem}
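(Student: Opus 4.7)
The plan is to split $\mathcal{D}_M$ into $\mathcal{C}_M$ and its complement, use the zero-control argument on $\mathcal{C}_M$ (since $f\equiv 0$), and on the unsafe side invoke the disjointness-based saturation property established in the preceding proposition to collapse the product $B_M$ locally to a single smoothly-saturated CBF whose validity has already been proven in Proposition~\ref{prop:Kappas}. First I would observe that $B_M:\mathcal{D}_M \to \mathbb{R}$ is $C^1$ as a finite product of $C^1$ functions $\sigma_{\kappa_i}\circ B_i$, so the derivatives appearing in \eqref{eq:CBFrequirement} are well defined.

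For $x\in\mathcal{C}_M$, every $B_i(x)\ge 0$ gives $0\le\sigma_{\kappa_i}\circ B_i(x)\le 1$ and hence $B_M(x)\ge 0$; since $f\equiv 0$ in \eqref{eq:ControlSystem}, the choice $u=0$ yields $\dot{B}_M(x,0)+\eta B_M(x)=\eta B_M(x)\ge 0$, which verifies \eqref{eq:CBFrequirement} on the safe region.

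The main step handles $x\in\mathcal{D}_M\setminus\mathcal{C}_M$, which forces $B_j(x)<0$ for at least one index $j$. I would first argue that such $j$ is unique: if $B_i(x)<0$ for some $i\ne j$ as well, then $\sigma_{\kappa_i}\circ B_i(x)<0<1$, placing $x$ in both sets of \eqref{eq:ChooseKappas} and contradicting their disjointness. Sharpening this, \eqref{eq:ChooseKappas} together with $B_j(x)<0$ forces $\sigma_{\kappa_i}\circ B_i(x)\ge 1$ for every $i\ne j$, which by the explicit form of $\sigma$ in \eqref{eq:SmoothSaturationFunction} means $\sigma_{\kappa_i}\circ B_i(x)=1$ and $\tfrac{\partial}{\partial x}(\sigma_{\kappa_i}\circ B_i)(x)=0$. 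Applying the product rule, all cross-terms vanish and the factors multiplying the surviving $j$-th term equal one, producing the local identities
\[
B_M(x)=\sigma_{\kappa_j}\circ B_j(x),\qquad L_g B_M(x)=L_g(\sigma_{\kappa_j}\circ B_j)(x).
\]
Because $\sigma_{\kappa_j}\circ B_j$ is itself a valid CBF by Proposition~\ref{prop:Kappas}, there exists $u$ satisfying $L_g(\sigma_{\kappa_j}\circ B_j)(x)\,u+\eta(\sigma_{\kappa_j}\circ B_j)(x)\ge 0$, and the very same $u$ then verifies \eqref{eq:CBFrequirement} for $B_M$, completing the argument.

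The hard part is the third paragraph: establishing uniqueness of the active index $j$ and \emph{genuine} saturation (value one, zero gradient) of every inactive factor. Without the strict separation $\Delta_{ij}>0$ and the tailored choice of the $\kappa_i$ from the preceding proposition, $\tfrac{\partial B_M}{\partial x}$ would inherit nonzero cross-terms or two simultaneously negative factors could coexist, breaking both the clean reduction to a single CBF and the non-degeneracy of $L_g B_M$ needed on the unsafe side. It is worth noting that potential boundary subtleties, such as multiple $B_i$ vanishing at the same $x$, are automatically excluded by \eqref{eq:ChooseKappas}, since $B_j(x)<0$ forces $B_i(x)\ge\kappa_i>0$ for all $i\ne j$; consequently no separate measure-zero treatment is required.
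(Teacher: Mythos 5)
Your proof is correct and follows essentially the same route as the paper's: the zero control handles $\mathcal{C}_M$ since $f\equiv 0$, and on the unsafe side the disjointness property (via \eqref{eq:ChooseKappas}) collapses $B_M$ to the single active saturated factor, whose validity is inherited from Proposition~\ref{prop:Kappas}. You are in fact slightly more careful than the paper, which only records the pointwise equality of \emph{values} $B_M(x)=B_i(x)$ and leaves implicit the product-rule step you spell out --- that each saturated inactive factor equals one \emph{and} has vanishing gradient, so that $L_gB_M(x)=L_g(\sigma_{\kappa_j}\circ B_j)(x)$.
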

\begin{proof}
For $x\in\cal C_M$, the zero control $u\equiv 0$ satisfies \eqref{eq:CBFrequirement} because the drift term $f(x)$ is zero. We show that for $x \not \in {\cal C}_M$, \eqref{eq:CBFrequirement} can be satisfied.

By the disjoint property of the assumed CBF functions, when $B_M(x)<0$, we have $\exists i$, such that $\sigma_{\kappa_i} \circ B_i(x) = B_i(x)<0$, and $\sigma_{\kappa_j} \circ B_j(x)=1$ for $j\neq i$. Hence, $B_M(x)=B_i(x)$. Because $B_i(x)$ is assumed to be a valid CBF function, and both ${\cal D}_M \subset {\cal D}_i$ and ${\cal C}_M \subset {\cal C}_i$ hold, the CBF property holds for $B_M(x)$. 
  \end{proof}  

\begin{remark}
    Due to the way we have constructed the multi-obstacle CBF, the equilibrium analysis for a single obstacle carries over here without changes. This is because, when the robot is at a boundary of an obstacle, the values of the saturated CBFs for the other obstacles will all be one. 
\end{remark}

\section{Smoothing Reference Control Variables}
\label{sec:SmoothRefControl}
We have created a continuous vector field for a single target position surrounded by multiple obstacles. However, discontinuities are introduced when we switch from one target to another. The jumps appear in two ways: when the attraction points used in the CLF are updated and when the associated reference control variable $u^\mathrm{ref}$ is updated. The top row of Fig.~\ref{fig:nonsmooth target switch} illustrates the jumps in the control signals associated with a list of targets (black dots). 

This section proposes a means to smooth the control signals. To begin, we define $\tau : \real \to \real$ by
\begin{equation}
\label{eq:target-switch}
    \tau(s) : = \begin{cases}
            0 & s \le 0 \\
            s^2(3 - 2s) & 0 < s < 1 \\
            1 & s \ge 1,
            \end{cases}
\end{equation}
where we have $\forall s\in\real$, $\frac{d\tau(s)}{ds}$ exists and satisfies
\begin{equation}
\label{eq:target-switch-derivative}
    \frac{d\tau(s)}{ds} : = \begin{cases}
                            0 & s \le 0 \\
                            6s - 6s^2 & 0 < s < 1 \\
                            0 & s \ge 1.
                            \end{cases}
\end{equation}
Upon noting that $\left.\frac{d\tau(s)}{ds}\right|_{s=0}=0$, $\left.\frac{d\tau(s)}{ds}\right|_{s=1}=0$ and for all $0 \leq s \leq 1$, $\frac{d\tau(s)}{ds} > 0$, it follows that $\tau(s)$ is continuously differentiable and monotonic.

Consider now two target positions $\Gcal_1$ and $\Gcal_2$ with their reference control variables $u^\mathrm{ref}_1$ and $u^\mathrm{ref}_2$ from \eqref{eq:CLFSolution}. A smoothed $u^\mathrm{ref}$ is defined as the convex combination
\begin{equation}
    u^\mathrm{ref} = \tau(\frac{t}{T}) u^\mathrm{ref}_1 + (1 - \tau(\frac{t}{T})) u^\mathrm{ref}_2,
    \label{eq:smooth-uref}
\end{equation}
where $T$ is a switching time parameter and $t$ is the estimated time to $\Gcal_1$. By the construction of the scaling function $\tau$ in \eqref{eq:target-switch} and the convex combination in \eqref{eq:smooth-uref}, we only execute the interpolation process if the robot is estimated to be less than $T$ seconds away from $\Gcal_1$. The value of $t$ is updated to the estimated time to $\Gcal_1$ when the $\Gcal_1$ is updated. The value of $t$ decreases while the robot is moving. By applying the smoothed $u^\mathrm{ref}$ and with the same list of goals, the control variables become continuous, as shown in the bottom row of Fig.~\ref{fig:nonsmooth target switch}.


\section{Simulation Results with single and multiple obstacles}
\label{sec:Simulation}
In this section, we first use simulation to study the behavior and liveness of the proposed CLF-CBF system with a single obstacle. Next, we run the system on several synthetic environments with 20 obstacles in Robot Operating System (ROS) \cite{ros} with C++. 

\begin{remark}
    For the CBF in \eqref{eq:OriginalCBF}, we take $Q = I$ and in Prop.~\ref {prop:Kappas}, we take $\kappa_1=\cdots=\kappa_M=\min\{\Delta_{i}^2\}_{i=1}^M$, which is the minimum of the square of the distance between any of the obstacles.
\end{remark}

\subsection{Robot Model in Simulation}
\label{sec:ALIPRobotModel}
In MATLAB and ROS, the bipedal robot is represented by the Angular momentum Linear Inverted Pendulum (ALIP) model~\cite{gong2020angular}. The ALIP robot takes piece-wise constant inputs from the CLF-CBF-QP system. Let $g, H, \tau$ be the gravitational constant, the robot's center of mass height, and the time interval
    of a swing phase, respectively. The motion of an ALIP model on the \xaxis satisfies 
    \begin{equation}
        \begin{bmatrix}
            x_{k+1}\\
            \dot{x}_{k+1}
        \end{bmatrix} = 
        \begin{bmatrix}
            \cosh(\xi) & \frac{1}{\rho}\sinh(\xi)\\
            \rho\sinh(\xi) & \cosh(\xi)
        \end{bmatrix}
        \begin{bmatrix}
            x_{k}\\
            \dot{x}_{k}
        \end{bmatrix} +
        \begin{bmatrix}
            1-\cosh(\xi) \\
            -\rho\sinh(\xi)
        \end{bmatrix} p_x,
    \end{equation}
    where $x_k$ and $\dot{x}_k$ are the contact position and     velocity of the swing foot on the \xaxisN, $p_x$ is the center of mass (CoM) position on the \xaxis of the robot, $\xi =
    \rho\tau$ and $\rho = \sqrt{g/H}$. The motion of the robot on the
    \yaxis can be similarly defined.




\subsection{Behavior Study with Single Obstacle in MATLAB}
The optimal control command of the robot is the solution of the CLF-CBF-QP problem defined in \eqref{eq:CBLF-QP}. The time interval of a swing phase is set to $\tau= 0.3s$. The robot updates its pose based on the ALIP model and the optimal control command. The updated pose is then fed back to the CLF-CBF-QP system to compute the optimal control for the next iteration. This process continues until the robot reaches the target or collides with an obstacle.

Figure~\ref{fig:result1} shows how the trajectories vary as a function of a single obstacle's position with a fixed initial robot pose of $(-15, -15, -15^\circ)$, marked as the magenta arrow. The black trajectory is the nominal trajectory without any obstacles present. Each colored trajectory and matching circle represent a distinct simulation result.
The robot successfully avoids the obstacle in all cases. In Fig.~\ref{fig:result2}, we show how the trajectories vary as a function of different robot orientations with a fixed obstacle location.

\begin{figure}[t]%
    \centering
    \subfloat[]{
        \includegraphics[height=0.7\columnwidth, trim={0cm 0cm 0cm 0cm},clip]{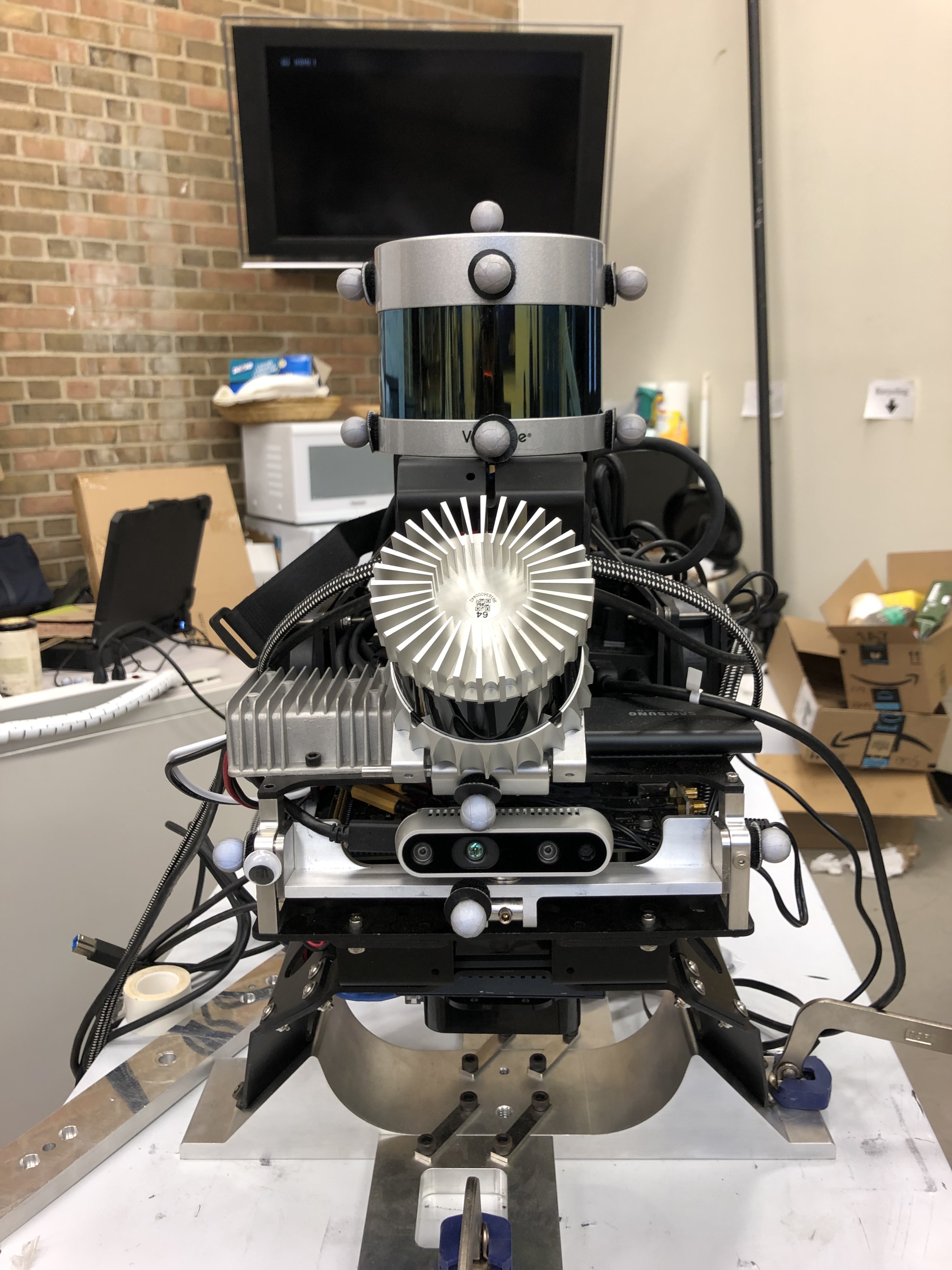}}
    \subfloat[]{
        \includegraphics[height=0.7\columnwidth, trim={15 10 20 30},clip]{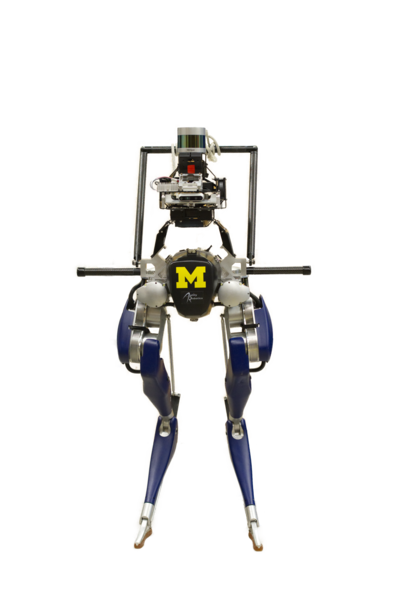}}
\caption[]{The left shows the sensor suite with different sensors, and the right
    shows the sensor suite mounted on Cassie Blue.
}%
    \label{fig:FullTorsoWithCassie}%
\end{figure}

\begin{remark}
When the robot is within an obstacle, there is also a valid solution that pushes the robot outside of the obstacle. Consider the CBF constraint \eqref{eq:QP-cbf-constraint},
\begin{equation}
    L_fB(x)+L_gB(x)u+\eta B(x)\geq 0.
\end{equation}
When the robot is withing an obstacle, $B(x)<0$ and the QP selects $u$ such that $L_fB(x)+L_gB(x)u\geq-\eta B(x)$, causing the robot to leave  the obstacle.
\end{remark}

\begin{figure*}[t]%
    \centering
    \subfloat[]{%
    \label{fig:sim5}%
    \includegraphics[trim=0 0 0 0,clip,width=0.98\columnwidth]{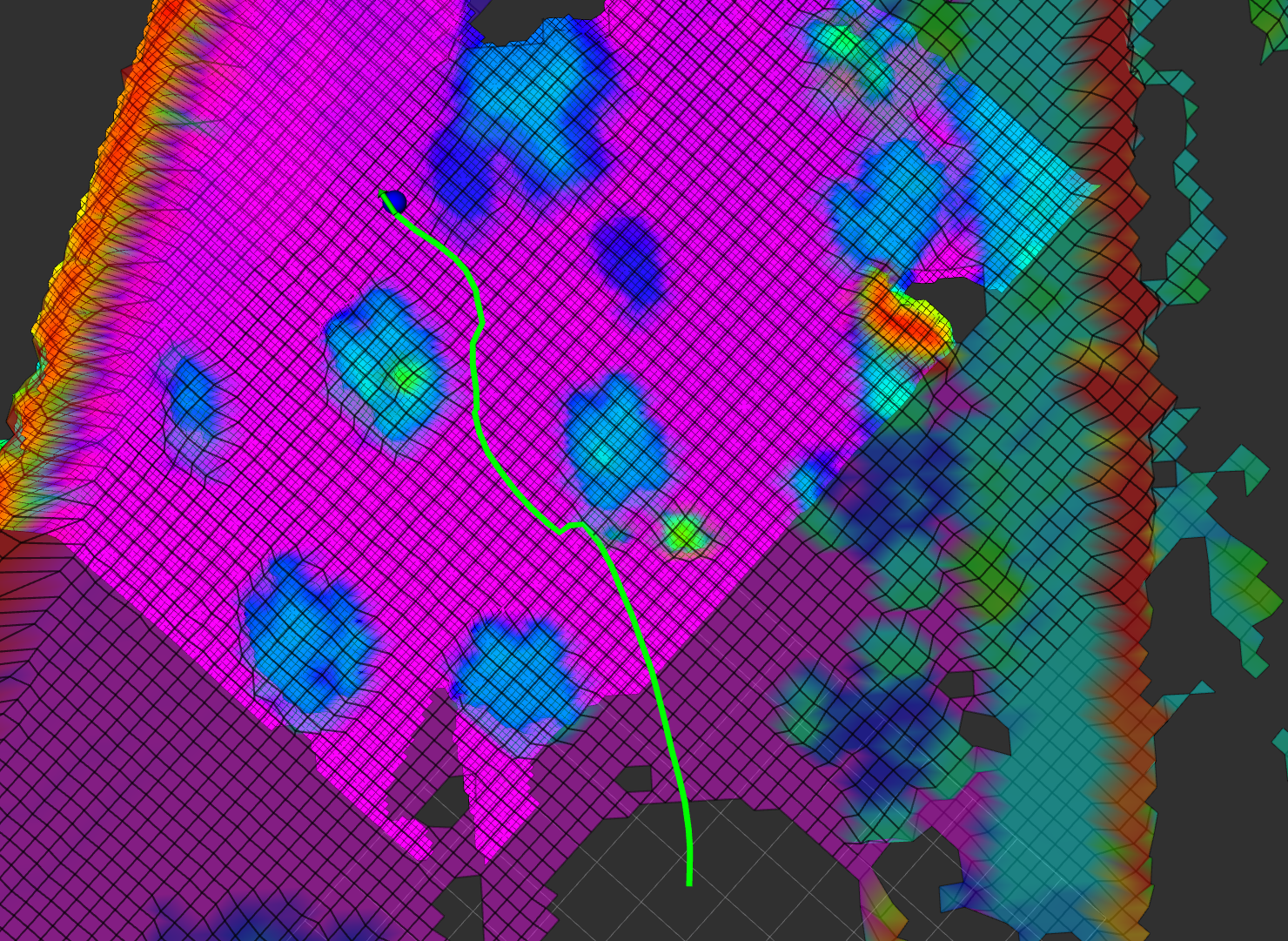}}~
    \subfloat[]{%
    \label{fig:sim6}%
    \includegraphics[trim=0 0 0 0,clip,width=0.98\columnwidth]{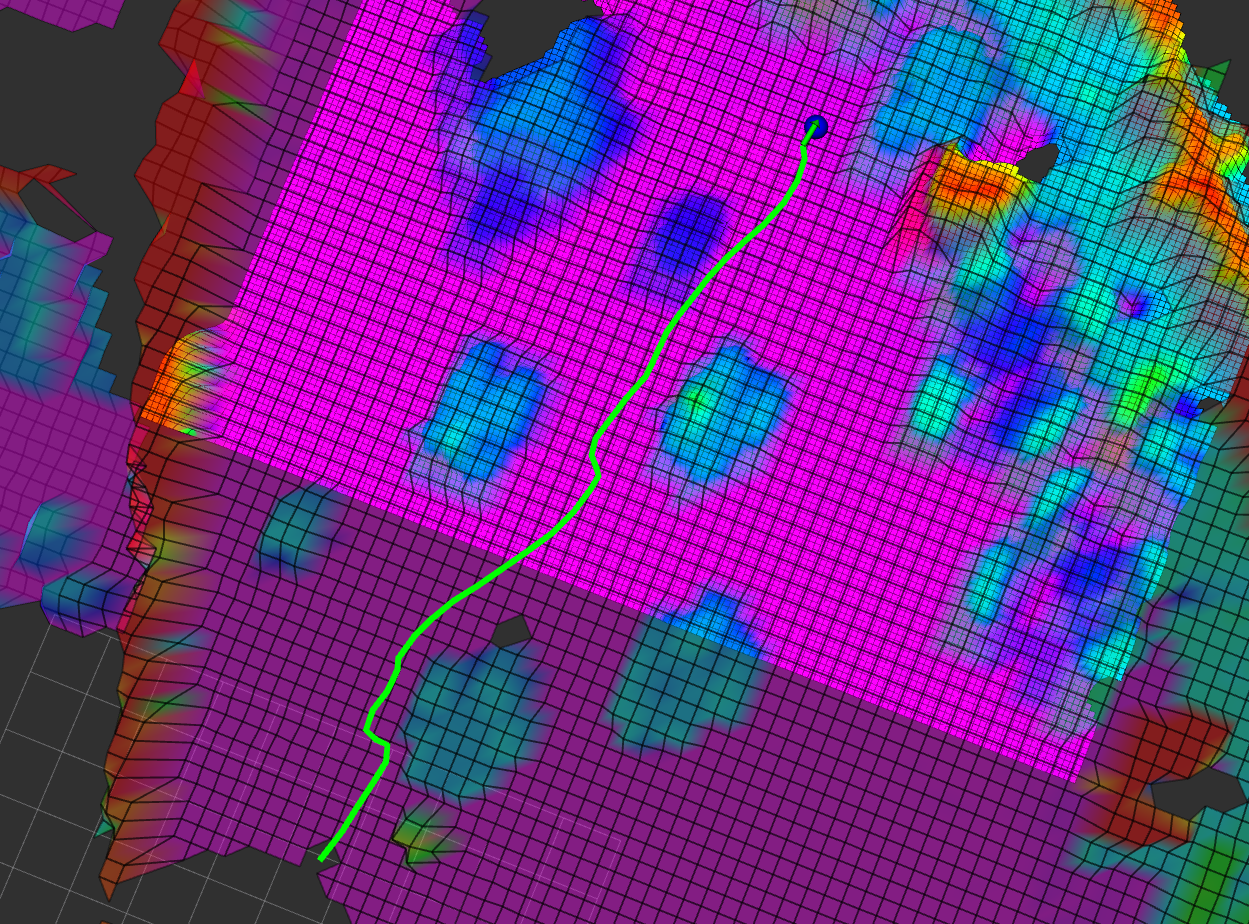}}\\
    \subfloat[]{%
    \label{fig:sim7}%
    \includegraphics[trim=0 0 0 0,clip,width=0.98\columnwidth]{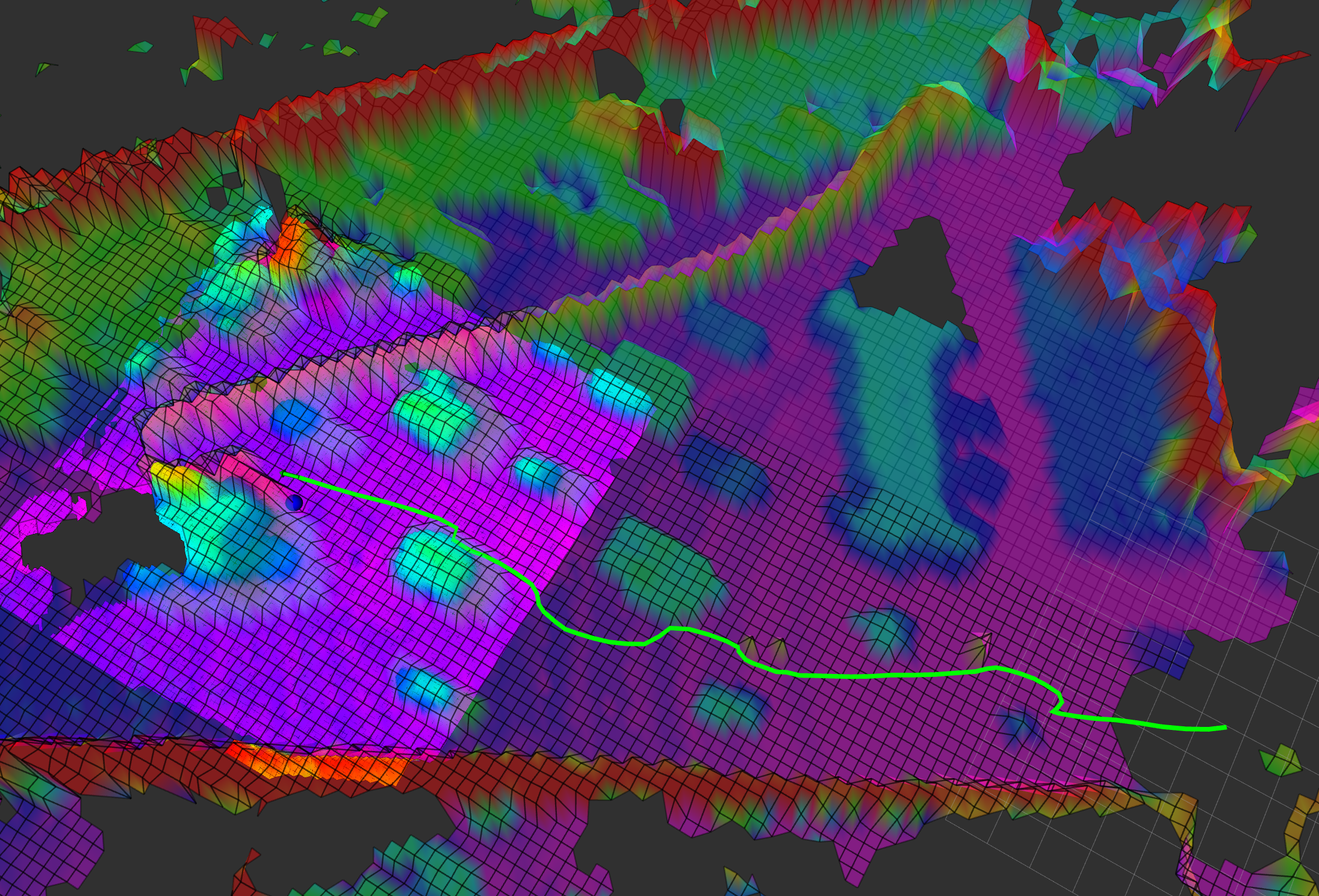}}~
    \subfloat[]{%
    \label{fig:sim8}%
    \includegraphics[trim=0 0 0 0,clip,width=0.98\columnwidth]{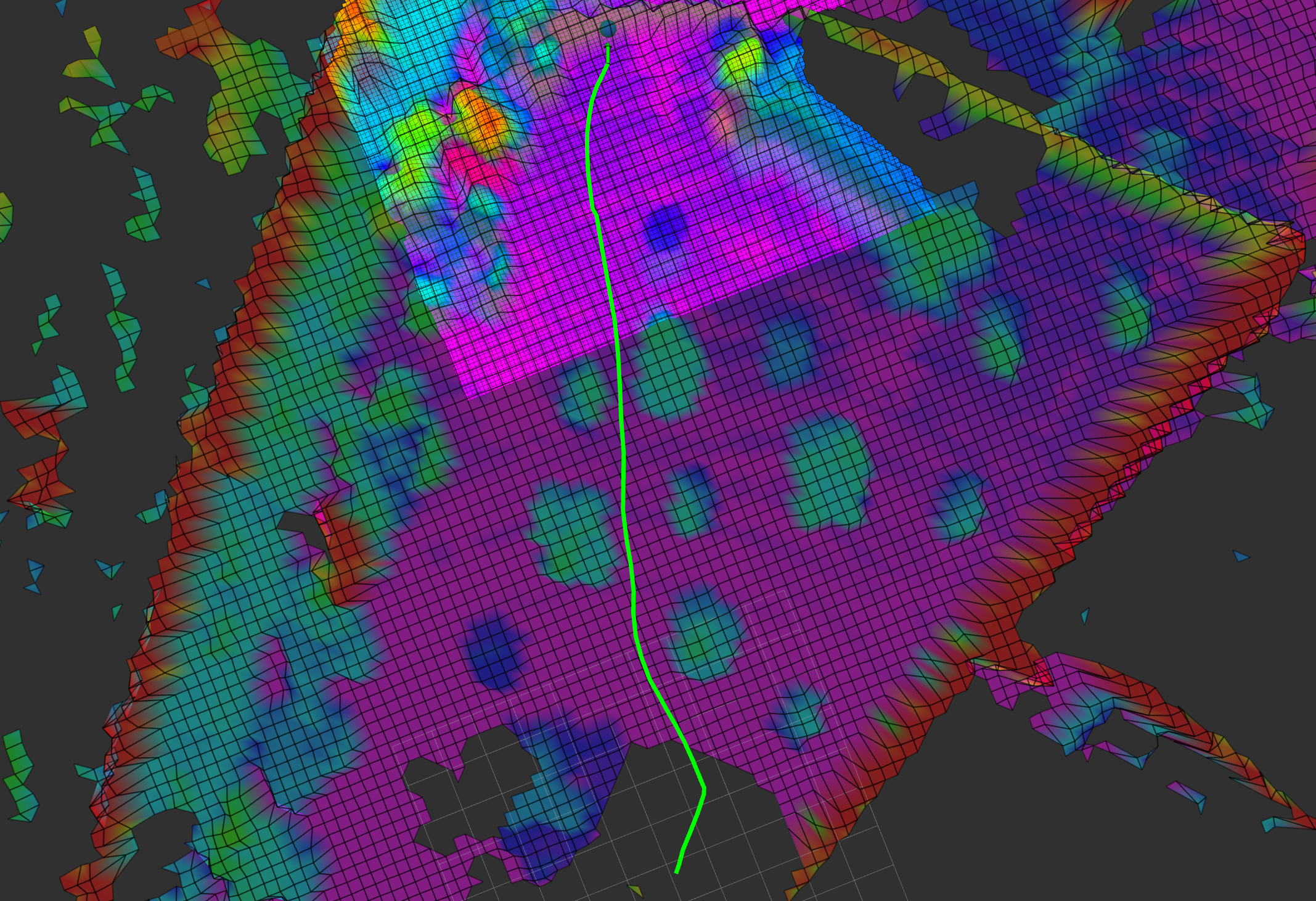}}%
    \caption[]{Autonomy experiments with Cassie Blue on the first floor of FRB. The green arrow is Cassie's pose and the green lines are the resulting trajectories. The blue sphere is the selected target position. The map is colored by height and the highlighted area is the local map.}
    \label{fig:FRB experiments}
\end{figure*}

\subsection{Liveness Analysis in MATLAB}
We analyze the liveness by placing an obstacle with a fixed radius $(r=1)$ at different locations. The robot starts at $(-15, -15, -15^\circ)$ and the target is located at $(0,0)$. The obstacle is placed at every 0.2 meter. If the robot successfully reaches the target without collision, the obstacle location is marked in green otherwise in red, as shown in Fig.~\ref{fig:ReachabilityAnalysis}. All the red points either originally collide with the robot or the target.




\subsection{Multi-Obstacle Simulation with ROS in C++}
In this simulation, we implement a local map centering at robot position with a fixed size and a sub-goal selector to place a target within the local map to achieve long-term planning as not all the obstacles are perceived by the robot at the beginning in practice. Even though the global map is available in simulation but it is not available in practice, therefore, only the information within the local map at the specific timestamp is provided to the robot. The robot model is the same ALIP model in Sec.~\ref{sec:ALIPRobotModel}.

In Fig.~\ref{fig:multiple obstacle simulation}, we generate two noise-free and two noisy synthetic maps with the size of $50\times30$ meters. Each map contains 20 obstacles marked as blue circles. We run six different initial poses and final goals for each map. Different colors represent different runs in the map. The highlighted area is the local map at that specific timestamp. An intermediate goal is chosen at the intersection between the boundary of the local map and the line connecting the robot and the final goal at the current timestamp. If the intermediate goal collides with an obstacle, it is moved back along the line. The intermediate goal is updated when it is reached or becomes inside of an obstacle due to the update of the local map. The robot with ALIP model successfully reaches the goals in all $6\times4 =24$ runs.



\section{Experimental Results on a Bipedal Robot}
\label{sec:Experiments}
We perform several experiments of the proposed CLF-CBF-QP system on Cassie Blue, a bipedal robot with 20 degrees of freedom. The entire system integrates elevation mapping, intermediate goal selection, and the low-level CLF-CBF obstacle avoidance system.

\subsection{Autonomy System Integration}
The following is summarized from \cite{huang2021efficient} for the completeness of the paper. To allow the robot to perceive its surroundings under different lighting conditions
and environments, we designed a perception suite that consists of an RGB-D
camera (Intel RealSense\texttrademark~D435) and a \velodyneN, as shown in Fig.~\ref{fig:FullTorsoWithCassie}. The sensor calibrations are performed
via\cite{
huang2020improvements, huang2021lidartag, huang2020intinsic, huang2021optimal}. The invariant extended Kalman filter (InEKF) \cite{hartley2020contact} estimates the pose of Cassie at 2k Hz. The raw point cloud is motion compensated by the InEKF and then used to build an elevation map. 


\subsection{Autonomy Experiment on Cassie Blue}
We conducted several indoor experiments with Cassie Blue on the first floor of the Ford Robotics Building (FRB) where tables and chairs are considered obstacles. To detect obstacles in the environment, an occupancy grid map is updated in real-time using the timestamped elevation map. Grids with heights greater than 0.2 meters are considered occupied. An occupied grid is defined as the boundary of obstacles if there is an unoccupied grid in its neighborhood. The Breadth First Search (BFS) algorithm \cite{bundy1984breadth} is utilized to find the separated obstacles in the map. Next, we apply the Gift Wrapping Algorithm \cite{cormen2009finding} to the boundary grids of obstacles to find the convex hulls of the obstacles. Finally, the minimum bounding ball algorithm \cite{alzubaidi2014minimum} is applied to the convex hulls to find the minimum bounding circles of the obstacles. The circles are used to represent obstacles in the CBF function \eqref{eq:OriginalCBF}. The target position is selected by clicking a point in the global map. If the final target is not within the current local map, an intermediate goal will be selected within the local map. When an intermediate goal is reached by Cassie or becomes invalid because of the update of the local map, it is updated. In the experiments, Cassie successfully avoids all the obstacles and reaches the target position, as shown in Fig. \ref{fig:FRB experiments}. 
\section{Conclusion}
\label{sec:Conclusion}

This paper presented a reactive planning system that allows a Cassie-series bipedal robot to avoid multiple non-overlapping obstacles via a single, continuously differentiable control barrier function (CBF). The overall system detects an individual obstacle via a height map derived from a LiDAR point cloud and computes an elliptical outer approximation, which is then turned into a quadratic CBF. A continuously differentiable saturation function is presented that preserves the CBF property of a quadratic CBF while allowing the saturated CBFs for individual obstacles to be turned into a single CBF. The CLF-CBF-QP formalism developed by Ames \textit{et al.} can then be applied to ensure that safe trajectories are generated in the presence of multiple obstacles. Liveness is ensured by an analysis of induced equilibrium points that are distinct from the goal state. Safe planning in environments with multiple obstacles is demonstrated both in simulation and experimentally on the Cassie bipedal robot.


\section*{Acknowledgment}
\small{ 
Toyota Research Institute provided funds to support this work. Funding for J. Grizzle was in part provided by NSF Award No.~1808051. This article solely reflects the opinions and conclusions of its
authors and not the funding entities.} 

\bibliographystyle{./DefinesBib/bib_all/elsarticle-num-names}
\bibliography{DefinesBib/bib_all/strings-abrv.bib,DefinesBib/bib_all/ieee-abrv.bib,DefinesBib/bib_all/BipedLab.bib,DefinesBib/bib_all/Books.bib,DefinesBib/bib_all/Bruce.bib,DefinesBib/bib_all/ComputerVision.bib,DefinesBib/bib_all/ComputerVisionNN.bib,DefinesBib/bib_all/IntrinsicCal.bib,DefinesBib/bib_all/L2C.bib,DefinesBib/bib_all/LibsNSoftwares.bib,DefinesBib/bib_all/ML.bib,DefinesBib/bib_all/OptimizationNMath.bib,DefinesBib/bib_all/Other.bib,DefinesBib/bib_all/StateEstimationSLAM.bib,DefinesBib/bib_all/MotionPlanning.bib,DefinesBib/bib_all/Mapping.bib,DefinesBib/bib_all/TrajectoriesOptimization.bib,DefinesBib/bib_all/Controls.bib,DefinesBib/bib_all/CBF.bib, DefinesBib/bib_all/Intro_LR.bib}

\begin{appendix}
    \section{Equilibrium Analysis of Multi-Obstacle Systems}
\label{sec:EquilibriumAnalysis}

We give the complete analysis for a single obstacle, following the work of \cite{reis2020control}. Because the drift term of our model is zero, any equilibrium points are where the optimal control is the zero vector. To avoid this undesirable situation, we seek to find all equilibrium points $\Ecal = \{x | u^*=[0,0,0]^\transpose, r>0, \text{ and }  \delta,\theta \in (-\pi, \pi] \}$, where $u^*$ is the optimal control variable. Recall that$f=\begin{bmatrix} 0 & 0 & 0 \end{bmatrix}^\transpose$ in \eqref{eq:ControlSystem}, which leads to $L_fV(x)=L_fB(x)=0$. We denote the following to re-write the CLF and the CBF constraints:
\begin{equation}
\label{eq:Aelement}
\begin{aligned}
        \begin{bmatrix} d_x & d_y & 0 \end{bmatrix} &= -L_gB(x) \\
        \begin{bmatrix} a_x & a_y & a_\omega \end{bmatrix} &= L_gV(x),
\end{aligned}
\end{equation}
where 
\begin{equation}
\label{eq:LgVelement}
\begin{aligned}
        a_x &= -r\cos(\delta) + \frac{\beta\gamma^2\sin(2\beta\delta)  \sin(\delta)}{2r} \\
        a_y &= -r\sin(\delta) - \frac{\beta\gamma^2\sin(2\beta\delta)  \cos(\delta)}{2r} \\
        a_\omega &= \frac{\beta\gamma\sin(2\beta\delta)}{2}.
\end{aligned}
\end{equation}
The constraints become
\begin{align}
    \mathfrak{L}(x,u,s) &= a_xv_x+a_yv_y+a_\omega\omega-s+\mu V(x),\label{eq:RewriteQPCLFConstraints}\\
    \mathfrak{B}(x,u) &=d_xv_x+d_yv_y-\eta B(x),\label{eq:RewriteQPCBFConstraints}
\end{align}
and the cost function \eqref{eq:QP-cost} of the QP can then be re-written as:
\begin{equation}
\label{eq:RewriteQPCost}
\begin{aligned}
    \mathfrak{J}(u,s)&=\frac{1}{2}h_1(v_x-v_{x}^\mathrm{ref})^2 +\frac{1}{2}h_2(v_y-v_{y}^\mathrm{ref})^2\\
    &+\frac{1}{2}h_3(\omega-\omega^\mathrm{ref})^2+\frac{1}{2}ps^2,
\end{aligned}
\end{equation}
where $\{h_i\}_{i=1}^3$ are the diagonal elements of $H$ in \eqref{eq:QP-cost}, the weights of control variables $[v_x, v_y, \omega]$ with $h_i>0$.

The KKT conditions \cite{gass1997encyclopedia} of this quadratic program are:
\begin{align}
\frac{\partial L}{\partial u}&=Hu^*-Hu^\mathrm{ref}+\lambda_1L_gV^\transpose-\lambda_2L_gB^\transpose=0 \label{eq:KKT-Stationarity-u} \\
\frac{\partial L}{\partial s}&=ps-\lambda_1=0 \label{eq:KKT-Stationarity-s} \\
0&=\lambda_1(L_fV+L_gVu^*+\mu V-s) \label{eq:KKT-slackness-clf} \\
0&=\lambda_2(-L_fB-L_gBu^*-\eta B) \label{eq:KKT-slackness-cbf} \\
0 &\geq L_fV+L_gVu^*+\mu V-s \label{eq:KKT-feasibility-clf} \\
0 &\geq -L_fB-L_gBu^*-\eta B \label{eq:KKT-feasibility-cbf} \\
0 &\leq \lambda_1,\lambda_2 \label{eq:KKT-dual},
\end{align}
where $\lambda_1,\lambda_2\in\mathbb{R}$, and $L$ is the Lagrangian function and defined as
\begin{equation}
L(u,s,\lambda_1,\lambda_2)=\mathfrak{J}(u,s)+\lambda_1\mathfrak{L}(x,u,s)+\lambda_2\mathfrak{B}(x,u).
\end{equation}

Next, we analyze equilibrium points (if any) via four cases depending on whether each CLF or CBF constraint is active or inactive following \cite{reis2020control}.



\subsection{Both CLF and CBF are inactive}
\label{subsec:CLF-inactive-CBF-inactive}
When both constraints are inactive, we have 
\begin{equation}
\label{eq:CLF-inactive-CBF-inactive}
\begin{aligned}
    \lambda_1&=0\\
    \lambda_2&=0\\
    0 &> L_fV+L_gVu^*+\mu V-s\\
    0 &> -L_fB-L_gBu^*-\eta B.
\end{aligned}
\end{equation}
With \eqref{eq:KKT-Stationarity-u} and \eqref{eq:KKT-Stationarity-s}, $u^*$ and $s^*$ in this case are
\begin{equation}
\label{eq:CLF-inactive-CBF-inactive-closed}
\begin{aligned}
   u^*&=u^\mathrm{ref} \\
   s^*&=0.
\end{aligned}
\end{equation}
From \eqref{eq:CLFSolution}, as long as the goal is not reached, $u^\mathrm{ref}$ is not a zero vector. Hence, there is no equilibrium point in this case.

\subsection{CLF constraint inactive and CBF constraint active}
\label{subsec:CLF-inactive-CBF-active}
We prove that there is no equilibrium point in this case by contradiction. When the CLF constraint is inactive and the CBF constraint is active, we have 
\begin{equation}
\label{eq:CLF-inactive-CBF-active}
\begin{aligned}
    \lambda_1&=0\\
    \lambda_2&\geq 0\\
    0 &> L_fV+L_gVu^*+\mu V-s\\
    0 &= -L_fB-L_gBu^*-\eta B.
\end{aligned}
\end{equation}
With \eqref{eq:KKT-Stationarity-u} and \eqref{eq:KKT-Stationarity-s}, $u^*$, $s^*$ and $\lambda_2$ in this case are
\begin{equation}
\label{eq:CLF-inactive-CBF-active-closed}
\begin{aligned}
    u^*&=u^\mathrm{ref} + \lambda_2 H^{-1} L_gB^\transpose \\
    s^*&=0\\
    \lambda_2 &= -\frac{\eta B + L_fB + L_gBu^\mathrm{ref}}{L_gBH^{-1}L_gB^\transpose}.
\end{aligned}
\end{equation}
If there is an equilibrium point, then $u^*$ is the zero vector. Hence, at the equilibrium point, by $L_fV(x)=0$, $u^*=0$ and $s^*=0$, we have
\begin{equation}
\label{eq:CLF-constraint-CLF-inactive}
L_fV+L_gVu^*+\mu V-s^* = \mu V > 0,
\end{equation}
which conflicts with \eqref{eq:CLF-inactive-CBF-active}. Therefore, there is no equilibrium point in this case.

\subsection{CLF constraint active and CBF constraint inactive}
\label{subsec:CLF-active-CBF-inactive}
When the CLF constraint is active and the CBF constraint is inactive, we have 
\begin{equation}
\label{eq:CLF-active-CBF-inactive}
\begin{aligned}
    \lambda_1&\geq 0\\
    \lambda_2&= 0\\
    0 &= L_fV+L_gVu^*+\mu V-s\\
    0 &> -L_fB-L_gBu^*-\eta B.
\end{aligned}
\end{equation}
With \eqref{eq:KKT-Stationarity-u} and \eqref{eq:KKT-Stationarity-s}, $u^*$, $s^*$ and $\lambda_1$ in this case are
\begin{equation}
\label{eq:CLF-active-CBF-inactive-closed}
\begin{aligned}
    u^*&=u^\mathrm{ref} - \lambda_1 H^{-1} L_gV^\transpose \\
    s^*&=\frac{\lambda_1}{p}\\
    \lambda_1 &= \frac{p\mu V + pL_fV + pL_gVu^\mathrm{ref}}{pL_gVH^{-1}L_gV^\transpose+1}.
\end{aligned}
\end{equation}
Using the variables defined in \eqref{eq:Aelement}, $u^*$ can be rewritten as:
\begin{equation}
\label{eq:CLF-active-CBF-inactive-u}
u^*=\begin{bmatrix} v_x^* \\ v_y^* \\ \omega^* \end{bmatrix} = \begin{bmatrix} v_x^\mathrm{ref} - \frac{\lambda_1a_x}{h_1} \\ v_y^\mathrm{ref} - \frac{\lambda_1a_y}{h_2} \\ \omega^\mathrm{ref} - \frac{\lambda_1a_\omega}{h_3} \end{bmatrix}.
\end{equation}
We know from \eqref{eq:LgVelement} that 
\begin{equation}
\label{eq:ay-aomega-delta}
    (a_y=0~~\& ~~a_\omega=0) \Longleftrightarrow \delta=0.
\end{equation}
In addition, we know from \eqref{eq:CLFSolution} that 
\begin{equation}
\label{eq:vy-omega-delta}
    (v_{y}^\mathrm{ref}=0 ~~\& ~~\omega^\mathrm{ref}=0) \Longleftrightarrow \delta=0.
\end{equation}
Therefore, we split this case into three cases based on the value of $\delta$.

\subsubsection{\texorpdfstring{$\delta=0$}{Lg} (Case I)} 
\label{subsubsec:CLF-active-CBF-inactive-case1}
Substituting $\delta=0$ to \eqref{eq:Aelement}, we have $a_x=-r<0,a_y=0,a_\omega=0$, and to \eqref{eq:CLFSolution}, we have $v_{x}^\mathrm{ref}>0, v_{y}^\mathrm{ref}=0,\omega^\mathrm{ref}=0$. Finally, with \eqref{eq:KKT-dual}, the optimal control command \eqref{eq:CLF-active-CBF-inactive-u} can be simplified as:
\begin{equation}
\label{eq:CLF-active-CBF-inactive-u-case1}
u^*=\begin{bmatrix} v_x^* \\ v_y^* \\ \omega^* \end{bmatrix} = \begin{bmatrix} v_x^\mathrm{ref} + \frac{\lambda_1r}{h_1} > 0 \\ 0 \\ 0 \end{bmatrix}.
\end{equation}
The optimal control command is not a zero vector, and hence there is no equilibrium point in this case.

\subsubsection{\texorpdfstring{$\delta>0$}{Lg} (Case II)} 
\label{subsubsec:CLF-active-CBF-inactive-case2}
When $\delta>0$, by the definitions in \eqref{eq:LgVelement}, we have $a_y<0,a_\omega>0$, and by \eqref{eq:CLFSolution}, we have $v_{y}^\mathrm{ref}>0,\omega^\mathrm{ref}<0$. With \eqref{eq:KKT-dual} and \eqref{eq:CLF-active-CBF-inactive-u}, we have
\begin{equation}
\label{eq:CLF-active-CBF-inactive-u-case2}
\begin{aligned}
v_y^* &= v_y^\mathrm{ref} - \frac{\lambda_1a_y}{h_2} > 0\\
\omega^* &= \omega^\mathrm{ref} - \frac{\lambda_1a_\omega}{h_3} < 0.
\end{aligned}
\end{equation}
The optimal control command is not a zero vector in this case. Therefore, there is no equilibrium points in this case either.

\subsubsection{\texorpdfstring{$\delta<0$}{Lg} (Case III)} 
\label{subsubsec:CLF-active-CBF-inactive-case3}
Similarly, by \eqref{eq:Aelement} and \eqref{eq:CLFSolution}, we have $a_y>0,a_\omega<0$ and $v_{y}^\mathrm{ref}<0,\omega^\mathrm{ref}>0$. With \eqref{eq:KKT-dual} and \eqref{eq:CLF-active-CBF-inactive-u}, we have
\begin{equation}
\label{eq:CLF-active-CBF-inactive-u-case3}
\begin{aligned}
v_y^* &= v_y^\mathrm{ref} - \frac{\lambda_1a_y}{h_2} < 0\\
\omega^* &= \omega^\mathrm{ref} - \frac{\lambda_1a_\omega}{h_3} > 0
\end{aligned}
\end{equation}
The optimal control command is not a zero vector in this case; hence, there is no equilibrium point in this case.

In summary, there is no equilibrium point when the CLF constraint is active and the CBF constraint is inactive.

\subsection{Both CLF and CBF constraint are active}
\label{subsec:CLF-active-CBF-active}

When the CLF constraint is active and the CBF constraint is active, we have 
\begin{equation}
\label{eq:CLF-active-CBF-active}
\begin{aligned}
    \lambda_1&\geq 0\\
    \lambda_2&\geq 0\\
    0 &= L_fV+L_gVu^*+\mu V-s\\
    0 &= -L_fB-L_gBu^*-\eta B.
\end{aligned}
\end{equation}
We can rewrite \eqref{eq:KKT-Stationarity-u} and \eqref{eq:KKT-Stationarity-s} as:
\begin{equation}
\label{eq:CLF-active-CBF-active-closed}
\begin{aligned}
    u^*&=u^\mathrm{ref} - \lambda_1 H^{-1} L_gV^\transpose + \lambda_2 H^{-1} L_gB^\transpose \\
    s^*&=\frac{\lambda_1}{p}\\
\end{aligned}
\end{equation}
Using the variables defined in \eqref{eq:Aelement}, $u^*$ can be rewritten as:
\begin{equation}
\label{eq:CLF-active-CBF-active-u}
u^*=\begin{bmatrix} v_x^* \\ v_y^* \\ \omega^* \end{bmatrix} = 
\begin{bmatrix} 
    v_x^\mathrm{ref} - \frac{\lambda_1a_x}{h_1} - \frac{\lambda_2d_x}{h_1} \\ 
    v_y^\mathrm{ref} - \frac{\lambda_1a_y}{h_2} - \frac{\lambda_2d_y}{h_2} \\ 
    \omega^\mathrm{ref} - \frac{\lambda_1a_\omega}{h_3} 
\end{bmatrix}.
\end{equation}

When the robot is at an equilibrium point, $u^*$ is the zero vector. By \eqref{eq:CLF-active-CBF-active} and $L_fB=0$, $u^*=0$, we have $B=0$, which implies that the robot is at the boundary of an obstacle. In the following proof of Sec. \ref{subsec:CLF-active-CBF-active}, we will assume the robot is at the boundary of obstacles.

The property of $B = 0$ leads to an immediate proposition which is helpful in finding the equilibrium point in the system when one of the components of the optimal control is $0$.
\begin{proposition}
\label{prop:ObstacleBoundary}
$d_y=0 \implies v_x^*=0$.
\end{proposition}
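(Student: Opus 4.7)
The plan is to read off the active CBF constraint under the standing assumption of this subsection, namely that both constraints are active and the robot sits on the obstacle boundary (so $B=0$), and then use the validity of the CBF to rule out the degenerate subcase.

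First, I would specialize the active CBF constraint $-L_f B - L_g B\, u^* - \eta B = 0$ to our setting. Since the drift is zero, $L_f B(x) = 0$, and since we are on the obstacle boundary, $B(x)=0$; what remains is $L_g B(x)\, u^* = 0$. By the definition in \eqref{eq:Aelement}, $L_g B = -[d_x,\; d_y,\; 0]$, so the active CBF constraint reduces to the scalar relation
\begin{equation}
    d_x v_x^* + d_y v_y^* = 0.
\end{equation}
Substituting the hypothesis $d_y = 0$ gives $d_x v_x^* = 0$, so it suffices to show $d_x \neq 0$ in order to conclude $v_x^* = 0$.

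Second, I would rule out $d_x = 0$ by invoking the CBF validity argument in Sec.~\ref{subsec:ProofOfCBF}. There it was shown that $L_g B(x) \neq 0$ on all of $\mathcal{D}$, because $a(x)$ is nonzero off the center of the obstacle (and the boundary is not the center since $r_o > 0$), the rows of $b(x)$ are linearly independent for $r>0$, and $g(x)$ is invertible for $0 < r < \infty$. Therefore $(d_x,d_y) \neq (0,0)$, and the assumption $d_y = 0$ forces $d_x \neq 0$.

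Combining the two steps, $d_x v_x^* = 0$ together with $d_x \neq 0$ yields $v_x^* = 0$, as claimed. There is no real obstacle in this argument; the only subtle point is making sure we are entitled to use $B=0$ at the equilibrium (established in the paragraph just before the proposition) and $L_g B \neq 0$ (established in the CBF-validity proof), after which the proposition reduces to a one-line consequence of the active CBF constraint.
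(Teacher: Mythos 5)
Your proof is correct and follows essentially the same route as the paper's: use $L_g B(x) \neq 0$ from the CBF-validity argument to force $d_x \neq 0$ when $d_y = 0$, then conclude $v_x^* = 0$ from $L_g B(x)\,u^* = 0$. You are in fact slightly more explicit than the paper in deriving $L_g B(x)\,u^* = 0$ from the active CBF constraint together with $L_f B = 0$ and $B = 0$, a step the paper leaves implicit.
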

\begin{proof}
By the proof in \ref{subsec:ProofOfCBF}, we have $L_gB(x)=\nabla B(x)\cdot g(x)\neq 0$ for $x\in{\cal D}$. Therefore, when $d_y=0$, we have $d_x \neq 0$. Then, we can further have $L_gB(x)u^*=0 \implies v_x^*=0$. 
\end{proof}


In addition, with the properties \eqref{eq:ay-aomega-delta} and \eqref{eq:vy-omega-delta}, we split this case into four cases based on whether $\delta$ and $d_y$ are zero.

\subsubsection{\texorpdfstring{$d_y=\delta=0$}{Lg} (Case I)} 
\label{subsubsec:CLF-active-CBF-active-case1}
Substituting to \eqref{eq:Aelement}, we have $a_x=-r<0,a_y=0,a_\omega=0$, and to \eqref{eq:CLFSolution}, we have $v_{x}^\mathrm{ref}>0,v_{y}^\mathrm{ref}=0,\omega^\mathrm{ref}=0$. Finally, with Proposition \ref{prop:ObstacleBoundary}, in this case the optimal control command \eqref{eq:CLF-active-CBF-active-u} can be written as:
\begin{equation}
\label{eq:CLF-active-CBF-active-u-case1}
u^*=\begin{bmatrix} v_x^* \\ v_y^* \\ \omega^* \end{bmatrix} = \begin{bmatrix} v_x^\mathrm{ref} - \frac{\lambda_1a_x}{h_1} - \frac{\lambda_2d_x}{h_1} \\ 0 \\ 0 \end{bmatrix} = \begin{bmatrix} 0 \\ 0 \\ 0 \end{bmatrix}.
\end{equation}
$\lambda_1$ and $\lambda_2$ can be obtained by \eqref{eq:CLF-active-CBF-active-u-case1}, \eqref{eq:CLF-active-CBF-active} and \eqref{eq:CLF-active-CBF-active-closed}:
\begin{equation}
\label{eq:CLF-active-CBF-active-mu-case1}
\begin{aligned}
\lambda_1 &= p\mu V > 0\\
\lambda_2 &= \frac{h_1v_x^\mathrm{ref}-p\mu Va_x}{d_x}
\end{aligned}
\end{equation}
By \eqref{eq:KKT-dual} and \eqref{eq:CLF-active-CBF-active-mu-case1}, we have
\begin{equation}
\label{eq:CLF-active-CBF-active-dx-case1}
\because v_x^\mathrm{ref}>0, a_x<0, \frac{h_1v_x^\mathrm{ref}-p\mu Va_x}{d_x}\geq0 \longrightarrow d_x>0. 
\end{equation}
Hence, there is an equilibrium point when $B=0$, $d_y=\delta=0$ and $d_x>0$.

\subsubsection{\texorpdfstring{$d_y\neq0,\delta=0$}{Lg} (Case II)} 
\label{subsubsec:CLF-active-CBF-active-case2}
When $\delta=0$, by \eqref{eq:Aelement} and \eqref{eq:CLFSolution}, we have $a_x=-r<0,a_y=0,a_\omega=0$ and $v_{x}^\mathrm{ref}>0,v_{y}^\mathrm{ref}=0,\omega^\mathrm{ref}=0$. Finally, with \eqref{eq:KKT-dual}, the optimal control command \eqref{eq:CLF-active-CBF-active-u} can be simplified as:
\begin{equation}
\label{eq:CLF-active-CBF-active-u-case2}
u^*=\begin{bmatrix} v_x^* \\ v_y^* \\ \omega^* \end{bmatrix} = \begin{bmatrix} v_x^\mathrm{ref} - \frac{\lambda_1a_x}{h_1} - \frac{\lambda_2d_x}{h_1} \\ -\frac{\lambda_2d_y}{h_2}\neq0 \\ 0 \end{bmatrix}.
\end{equation}
Because $v_y^*\neq0$, the optimal command is not a zero vector in this case. Equilibrium points don't exist when $d_y\neq$ and $\delta=0$.

\subsubsection{\texorpdfstring{$\delta>0$}{Lg} (Case III)}
\label{subsubsec:CLF-active-CBF-active-case3}
When $\delta>0$, by \eqref{eq:CLF-active-CBF-inactive-u-case2}, $\omega^*<0$. Hence, the optimal command is not a zero vector and there are no equilibrium points in this case.

\subsubsection{\texorpdfstring{$\delta<0$}{Lg} (Case IV)} 
\label{subsubsec:CLF-active-CBF-active-case4}
When $\delta<0$, by \eqref{eq:CLF-active-CBF-inactive-u-case3}, $\omega^*>0$. Hence, the optimal command is not a zero vector and there are no equilibrium points in this case.

\end{appendix}

\end{document}